\newcommand{\circled}[1]{\tikz[baseline=(char.base)]{
    \node[shape=circle, fill=black, inner sep=0.8pt] (char) {\color{white}\scriptsize #1};
}}
\newcommand{\modelname}{R$^2$TL-7B}
\newcommand{\xname}{R$^2$TL}
\newcommand{\modelDistill}{\modelname-Distill\xspace}
\newcommand{\modelFinal}{\modelname\xspace}
\newcommand{\RR}{DeepSeek-R1\xspace}
\newcommand{\OO}{OpenAI-o1\xspace}
\newcommand{\QWENS}{Qwen2.5-Coder-7B-Instruct\xspace}
\newcommand{\QWENL}{Qwen2.5-Coder-32B-Instruct\xspace}
\newcommand{\DSIII}{DeepSeek-V3\xspace}
\newcommand{\sizeDistill}{87K\xspace}      
\newcommand{\sizeRL}{3.1K\xspace}          
\newcommand{\eg}{\textit{e}.\textit{g}.,\xspace}
\newtheorem{definition}{Definition}[section]
\newtheorem{theorem}{Theorem}[section]
\NewTableCommand\seprule{\specialrule{\lightrulewidth,gray8}{2.5pt}{2.5pt}}
\NewTableCommand\uniquerule{\specialrule{\lightrulewidth,gray7,dashed}{2.5pt}{2.5pt}}
\definecolor{lightb}{RGB}{235,245,255}
\setlist[itemize]{leftmargin=*}
\definecolor{codegreen}{rgb}{0,0.6,0}
\definecolor{codegray}{rgb}{0.5,0.5,0.5}
\definecolor{codepurple}{rgb}{0.58,0,0.82}
\definecolor{backcolour}{rgb}{0.95,0.95,0.92}
\definecolor{codeblue}{rgb}{0,0,0.7}
\newcommand\codefontsize{\@setfontsize\codefontsize\@viiipt\@ixpt}
\renewcommand{\modelname}{CodeV-R1-7B}
\renewcommand{\xname}{CodeV-R1}
\title{QiMeng-\xname: Reasoning-Enhanced Verilog Generation}
\author{%
  \textbf{Yaoyu Zhu$^{1}$, Di Huang$^{1}$\Envelope, Hanqi Lyu$^{1,2}$, Xiaoyun Zhang$^{1,3}$, Chongxiao Li$^{1,3}$,} \\
  \textbf{Wenxuan Shi$^{1,3}$, Yutong Wu$^{1,3}$, Jianan Mu$^{1}$, Jinghua Wang$^{3}$, Yang Zhao$^{1,3}$,} \\
  \textbf{Pengwei Jin$^{1}$, Shuyao Cheng$^{1}$, Shengwen Liang$^{1}$, Xishan Zhang$^{1,4}$,}\\
  \textbf{Rui Zhang$^{1}$, Zidong Du$^{1}$, Qi Guo$^{1}$, Xing Hu$^{1}$,~ Yunji Chen$^{1,3}$} \\
  $^1$ State Key Lab of Processors, Institute of Computing Technology, CAS \\
  $^2$ University of Science and Technology of China \\
  $^3$ University of Chinese Academy of Sciences \\
  $^4$ Cambricon Technologies
  \\\\
  {\url{https://iprc-dip.github.io/CodeV-R1}}
}
\begin{document}

\maketitle
\def\thefootnote{\Envelope}\footnotetext{~Corresponding author. Contact: \{zhuyaoyu, huangdi, huxing\}@ict.ac.cn.}
\def\thefootnote{\arabic{footnote}}

\begin{abstract}
  Large language models (LLMs) trained via reinforcement learning with verifiable reward (RLVR) have achieved breakthroughs on tasks with explicit, automatable verification, such as software programming and mathematical problems. Extending RLVR to electronic design automation (EDA), especially automatically generating hardware description languages (HDLs) like Verilog from natural-language (NL) specifications, however, poses three key challenges: the lack of automated and accurate verification environments, the scarcity of high‐quality NL–code pairs, and the prohibitive computation cost of RLVR. To this end, we introduce \xname, an RLVR framework for training Verilog generation LLMs. First, we develop a rule-based testbench generator that performs robust equivalence checking against golden references. Second, we propose a round-trip data synthesis method that pairs open-source Verilog snippets with LLM‐generated NL descriptions, verifies code–NL–code consistency via the generated testbench, and filters out inequivalent examples to yield a high-quality dataset. Third, we employ a two-stage ``distill-then-RL'' training pipeline: distillation for the cold start of reasoning abilities, followed by adaptive DAPO, our novel RLVR algorithm that can reduce training cost by adaptively adjusting sampling rate. The resulting model, \modelFinal, achieves 68.6 \% and 72.9 \% pass@1 on VerilogEval v2 and RTLLM v1.1, respectively, surpassing prior state-of-the-art by 12$\sim$20 \%, while even exceeding the performance of 671B DeepSeek-R1 on RTLLM. We have released our model, training code, and dataset to facilitate research in EDA and LLM communities. 
  \footnote{Please refer to https://iprc-dip.github.io/CodeV-R1/ for relative resources.}
\end{abstract}

\section{Introduction}
\label{sec:introduction}

Large language models (LLMs) have recently demonstrated remarkable progress on reasoning tasks when trained via reinforcement learning with verifiable reward (RLVR). Notable examples include OpenAI-o1~\cite{openaio1} and DeepSeek-R1~\cite{deepseekr1}, which exhibit emergent reasoning capabilities on problems endowed with explicit verification procedures—such as software programming and mathematical problem solving. This success suggests a promising opportunity to apply RLVR for electronic design automation (EDA), specifically to the automatic generation of hardware description languages (HDLs) like Verilog from natural-language (NL) specifications~\cite{wang2025largelanguagemodelverilog}.

However, the three foundational components required for effective RLVR — (i) a reliable verification environment, (ii) high-quality NL-code data, and (iii) an efficient training algorithm — each present significant challenges in training reasoning LLMs for Verilog generation:

\textbf{(1) Automated verification of hardware designs remains difficult.}
RLVR requires a verification environment capable of providing accurate rewards. However, even in the data-rich software coding domain, such environments are rare. For example, most problems in the programming-contest dataset APPS~\cite{hendrycks2021apps} have only one or two sets of unit tests, and they exhibit a false-positive rate of up to 60\% when evaluated with an average of 20 unit tests~\cite{alphacode}. Consequently, the software community has adopted the practice of using LLMs to generate additional unit tests in order to improve verification quality~\cite{xu2025kodcode, li2023taco}. Nevertheless, this approach is both costly and of limited effectiveness for hardware designs, because LLMs lack the hardware-specific knowledge needed to handle the complex state spaces and corner cases of sequential circuits.
For example, if the reset and clock signals are not correctly configured, the intended functionality cannot be properly verified.

\textbf{(2) High-quality NL–code pairs for hardware designs are scarce.}
The proprietary nature of hardware designs severely limits the availability of annotated Verilog examples. Although several LLM-based methods have been proposed to synthesize NL–code pairs~\cite{pei2024betterv, codev, RTLCoder, gao2024autovcoder}, the resulting datasets often suffer from low-quality data (see Appendix~\ref{sec:case_study} for examples), rendering them inadequate for RLVR’s stringent requirements.

\textbf{(3) The computational cost of RLVR is prohibitive.}
Training a 32B LLM on 1K data for 5 epochs using 16 NVIDIA H100 GPUs with supervised fine-tuning (SFT) takes only 0.5 hours \cite{muennighoff2025s1}. In contrast, training a 14B LLM on 24K verifiable coding problems with reinforcement learning can take over 2.5 weeks on 32 NVIDIA H100 GPUs \cite{deepcoder2025}, making it prohibitively expensive to train a Verilog reasoning LLM using RLVR.

To overcome these challenges, we introduce \xname, a comprehensive RLVR framework for Verilog generation. Our contributions are threefold:

\textbf{(1) Automated testbench generation.}
We develop a rule‑based testbench generation framework to verify the equivalence between a given Verilog implementation and its golden reference as accurately as possible. For each golden reference, the framework first performs circuit‑structure analysis to extract information such as input/output (I/O) ports and reset/clock signals. It then enumerates all reset and clock-synchronization scenarios to improve verification accuracy.
Experiments demonstrate that our testbench achieves 96.1 \% fewer false negatives than the LLM-generated counterpart and detects 62.5 \% more injected errors in fuzzing tests for sequential circuits. Detailed experimental results are presented in Section~\ref{sec:testbench}.

\textbf{(2) Round-trip data synthesis for high-quality NL–code pairs.}
Leveraging our testbench generation framework, we propose the round-trip data synthesis approach that can automatically synthesize high-quality NL–code pairs from code snippets.
Specifically, candidate code snippets are first paired with LLM-generated NL descriptions, and then verified by regenerating the code from NL and comparing against the original for equivalence with our testbench. 
Only code that passes the testbench is retained and combined with the NL to form high-quality data for reinforcement learning.
We theoretically prove that, given strong LLMs and an ideal verification environment, this procedure yields NL–code pairs of sufficiently high quality for RLVR with a high probability.

\textbf{(3) Two-stage training with adaptive DAPO for cost-effective RLVR.}
We adopt a two-stage ``distill-then-RL'' training pipeline to cold-start LLMs' reasoning ability through SFT and apply RL to enhance model's reasoning ability. Specifically, we use DeepSeek-R1 as the NL-to-code LLM in our round-trip data synthesis to produce \texttt{(NL, Thought, Code)} triplets, based on which we perform SFT on our base LLM to obtain a distilled LLM with basic reasoning ability. Then, we apply RLVR on the distilled LLM using the equivalence-checked high-quality data to further enhance its Verilog generation capability.
Additionally, recognizing that RLVR’s bottleneck lies in sampling and verification~\cite{deepcoder2025}, we extend dynamic sampling policy optimization (DAPO)~\cite{yu2025dapo} with an adaptive mechanism that dynamically adjusts the number of samples per training step based on past sample discard rates. This approach notably reduces unnecessary sampling and verification overhead, thereby achieving a 1.25x acceleration.

Based on these techniques, we develop \modelFinal, a specialized reasoning LLM for Verilog generation with only around 2,656 A100-GPU-hours. On the VerilogEval v2~\cite{verilogeval_v2} and RTLLM v1.1~/~v2~\cite{rtllm} benchmarks, \modelFinal achieves 68.8\%~/~72.9\%~/~68.0\% pass@1, respectively. Remarkably, it surpasses the 671B DeepSeek-R1 by 8.1\% on RTLLM v1.1 and 3.3\% on RTLLM v2, demonstrating its strong RTL generation capabilities.

\section{Methods}
\label{sec:method}




\begin{figure}[t]
  \centering
\includegraphics[width=1\linewidth]{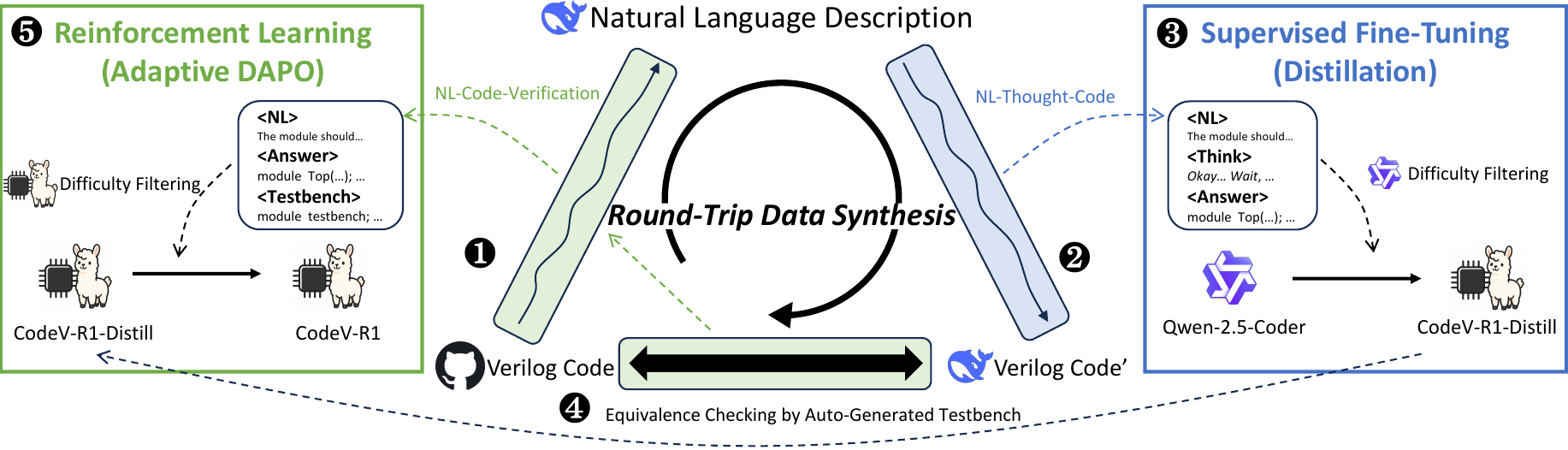}
\caption{\textbf{The overview of \xname. }The core components of our framework include an automated testbench (Section~\ref{sec:equivalence_checking}), a supervised fine-tuning process (Section~\ref{sec:distillation}), and a reinforcement learning process (Section~\ref{sec:reinforcement_learning}). 
}
\vspace{-10pt}
\label{fig:overview}
\end{figure}


Our framework comprises 5 stages, including an automated testbench generation framework (Figure~\ref{fig:overview}). Stages \circled{1} $\sim$ \circled{3} constitute the distillation phase, and stages \circled{4} and \circled{5} comprise the reinforcement learning phase.
Below, we introduce the processes of these phases:

\circled{1} \textbf{Code-to-NL.} Following prior work \cite{codev, mgverilog}, we collect Verilog code snippets from GitHub (denoted $y^*$) and use an LLM (DeepSeek-V3 \cite{deepseekv3}) to produce corresponding natural-language summaries (denoted $x$), creating an NL–code corpus $\{(x_i, y^*_i)\}$ with approximately 150K data samples.

\circled{2} \textbf{NL-to-Code.} Using DeepSeek-R1, we take each NL description $x_i$ from stage 1 and generate the ``thought'' (denoted $c'_i$) as well as an Verilog code snippet (denoted $y'_i$), producing NL–thought–code triples $\{(x_i, c'_i, y'_i)\}$.

\circled{3} \textbf{Supervised Fine-Tuning.} We first filter the $\{(x_i, c'_i, y'_i)\}$ dataset by removing any examples for which base LLMs (\eg \QWENS~/~\QWENL \cite{qwen2.5coder}) can generate correct code in any of 5 attempts (correctness is verified using our automatically generated testbench). We then perform SFT using these data on the base LLM to bootstrap their reasoning ability, yielding the distilled model, \modelFinal-Distill. This stage uses approximately 87K examples.

\circled{4} \textbf{Equivalence Checking.} We use our automated testbench to verify equivalence between the original snippets $y^*$ and the newly generated snippets $y'$. Any non-equivalent pairs $\{(x_i, y^*_i)\}$ are discarded, while equivalent pairs are retained as high-quality data for subsequent RL training.

\circled{5} \textbf{Reinforcement Learning.} We again filter the retained $\{(x_i, y^*_i)\}$ set by removing any examples where the distilled model \modelFinal-Distill generates correct code in all of 5 attempts (as checked by the testbench). After this filtering, approximately \sizeRL examples remain. We then apply our adaptive DAPO algorithm, a novel RLVR algorithm, to further improve Verilog-generation performance, resulting in the final model, \modelFinal.
Next, we will describe in detail the automated testbench generation framework as well as the two training phases, distillation and RL.

\subsection{Automated Testbench Generation Framework for Verilog Code} 
\label{sec:equivalence_checking}

To facilitate the rule-based reward mechanism for the RL process, we have developed a specialized framework. This framework verifies the functionality of the generated Verilog code by conducting edge-triggered simulation and comparing it against the reference code. The verification framework unfolds in three consecutive phases:

\textbf{Phase 1: Circuit-Structure Analysis.}
Before performing functional verification, we extract the input/output (I/O) ports along with their respective bit-widths from the reference golden code using Yosys~\cite{yosys}. For sequential circuits, we identify clock signals, noting their edge polarity (rising or falling), and characterize reset signals through control flow analysis. Reset signals are categorized based on synchrony (synchronous if they depend on the clock) and polarity (active-high or active-low).

\textbf{Phase 2: Simulation.}
We simulate by providing random inputs to both the generated and the reference codes, and evaluating the equivalence of outputs.
For \textbf{combinational circuits}, we employ \(M\) = 100 independent simulation sequences for equivalence evaluation, each comprising \(N\) = 1000 inputs.
Regarding \textbf{sequential circuits}, we adopt a dual-stage validation approach when dealing with circuits that have either one reset signal or no reset signal at all:
Firstly, we execute simulations using \(M\) = 100 sequences, each with \(N\) = 1000 clock toggles (500 cycles) with randomized inputs. In this stage, deterministic reset signals—derived from golden reset behavior extracted via Yosys and representing expected, consistent reset logic—are applied at the start of each sequence, primarily aimed at testing the circuit's core functionality.
Secondly, we conduct simulations with an identical number of sequences and clock cycles with random reset signals, which validates the consistency of the reset signal operation.
For circuit designs featuring multiple reset signals, we exhaustively test every non-conflicting combination, all maintaining the aforementioned \(\frac{MN}{2}\) cycle count.

\textbf{Phase 3: Verification.}
After each clock toggle, we assess the equivalence of the outputs between the generated Verilog code and the reference implementation. This process results in a total of $2MN$ assessments for typical sequential circuits and $MN$ assessments for combinational circuits. The verification outcome is quantified by an error rate metric \(\epsilon = \frac{\text{Error Number}}{2MN} \times 100\%\). A value of \(\epsilon = 0\%\) indicates that the generated code functions correctly within our testbench environment.
Through 32-way parallelization, the simulation achieves a throughput of 15 instances per second.

\subsection{\modelDistill: Supervised Distillation for Verilog Data}
\label{sec:distillation}

Our pipeline for distillation begins with a set of Verilog code (denoted as $y^*_i$) collected from GitHub. We use \DSIII to summarize these code snippets, producing instructions $x_i$ corresponding to $y^*_i$ (stage \circled{1}).
Then, to produce the corpus for distillation, we ask \RR to generate responses containing "thought" $c'_i$ and Verilog code snippet $y'_i$ (stage \circled{2}).
These two stages yield approximately 150K NL-thought-code triples $(x_i, c'_i, y'_i)$.

Next, we curate a \textit{challenging subset} through two filters: (1) retaining only instructions where baseline models (\QWENS~/~\QWENL) fail to generate the code passing the functional verification (Section~\ref{sec:equivalence_checking}) to $y^*_i$, and (2) ensuring synthesizability of $y^*_i$ with Yosys~\cite{yosys}.
In addition, to prevent benchmark contamination, we remove samples where the generated code $y'_i$ exhibits Rouge-L similarity > 0.5~\cite{rouge-l} to VerilogEval v1~\cite{liu2023verilogeval} / v2~\cite{verilogeval_v2} or RTLLM v1.1~\cite{rtllm} / v2~\cite{rtllm_v2}, yielding \sizeDistill high-quality samples (stage \circled{3}).

Finally, we initialize \modelDistill from \QWENS and fine-tune it to generate complete responses ($c'_i$, $y'_i$) given $x_i$. Following \RR's methodology~\cite{deepseekr1}, we maximize the likelihood of the generated responses using our prompt template (see Appendix~\ref{sec:prompts}), with implementation specifics detailed in Section~\ref{sec:implementation_details} (stage \circled{3}).

\subsection{\modelFinal: Reinforcement Learning on the Distilled Model}
\label{sec:reinforcement_learning}
To further improve the model's reasoning ability, we perform reinforcement learning fine-tuning based on \modelDistill with carefully selected high-quality Verilog data (stage \circled{4} and stage \circled{5}).
Below we will introduce our data curation method (Section \ref{sec:data_curation}), RL training algorithm, and reward design for RL (Section \ref{sec:RL_algorithm}). 

\subsubsection{High-quality Data Curation}
\label{sec:data_curation}


Experiences from prior research suggest that conducting RL training on problems that \textit{the model can solve but requires reasoning to address} can more effectively enhance the model's RL capabilities \cite{kimik1.5}. 
Furthermore, given potential inconsistencies between the golden code $\{y^*_i\}$ in the original dataset collected from GitHub and the instructions $\{x_i\}$ generated by \DSIII, we prioritize \textit{ensuring the correctness} of selected problems. 
To summarize, our RL (question, answer) pairs must meet three key criteria: being \textbf{solvable}, \textbf{challenging}, and \textbf{error-free}. 


To implement this framework, we identify problems where \RR~\textit{successfully generates code matching the golden one in the original dataset}, while both \QWENS~and \QWENL~\textit{fail to produce equivalent solutions}. 
Specifically, we conduct equivalence checking between the $\{y'_i\}$ code generated by \RR~in the \sizeDistill~dataset and $\{y^*_i\}$ in the original dataset, retaining only \textit{validated} $\{(x'_i, y^*_i)\}$ pairs for RL training. 

For difficulty enhancement, we employ \modelDistill~to generate five code variants per question, excluding cases where all generated codes match the golden one, as these reflect patterns already mastered during supervised fine-tuning (stage \circled{4}). 
Through this rigorous selection process, we curate a final dataset of \sizeRL~\textit{high-quality} examples for reinforcement learning.

Additionally, we formalize the equivalence between code and natural language, and theoretically prove the effectiveness of our data curation. Intuitively, the Code-to-NL and NL-to-Code conversion process using LLMs inevitably leads to some information loss. Therefore, if the converted code remains equivalent to the original one after back-and-forth conversion, the probability of error during the conversion process is minimal. Detailed definition and proof are shown below.

\begin{definition}[NL-Code Deterministic Equivalence (NLCDE)]\label{def:nlcde}

Let \(\mathcal{F}\) denote the space of all code snippets, \(\mathcal{L}\) the space of natural-language (NL) descriptions, and \(\mathcal{R} \subseteq \mathcal{F} \times \mathcal{L}\) a \emph{semantic/functional equivalence relation} where \((f, l) \in \mathcal{R}\) iff code \(f\) fully implements NL \(l\) (or \(l\) precisely describes \(f\)).  

Consider two probabilistic models, $M_1: \mathcal{F} \to \mathcal{L} \ (\text{code-to-NL})$ and $ M_2: \mathcal{L} \to \mathcal{F} \ (\text{NL-to-code})$, the \textbf{NLCDE} states:  
For all \(f \in \mathcal{F}, l \in \mathcal{L}\):  
1. If \(M_1\) generates \(l\) with \(\Pr(l \mid f) = 1\), then \((f, l) \in \mathcal{R}\) (deterministic NL summaries are semantically equivalent to input code).  
2. If \(M_2\) generates \(f\) with \(\Pr(f \mid l) = 1\), then \((f, l) \in \mathcal{R}\) (deterministic code outputs are functionally equivalent to input NL).  
\end{definition}

\begin{theorem}[Semantic Equivalence in Round-Trip Transformations] \label{thm:semantic_equivalence}
Consider the probabilistic models \(M_1: \mathcal{F} \to \mathcal{L}\) (code-to-NL) and \(M_2: \mathcal{L} \to \mathcal{F}\) (NL-to-code) from the \textbf{NL-Code Deterministic Equivalence (NLCDE)} definition (Definition~\ref{def:nlcde}). Let \(Y \in \mathcal{F}\) be a random code snippet drawn from some distribution, and define the transformed objects:
$
X = M_1(Y) \in \mathcal{L},~ Y' = M_2(X) \in \mathcal{F}.
$
For any pair of objects \(A,B\), let \(E_{AB}\) denote the event “\(A\) and \(B\) are semantically equivalent.” If the round-trip transformation preserves equivalence with certainty under NLCDE, i.e.,
$
\Pr\bigl[E_{Y,Y'}\bigr] = 1,
$
then both forward and backward transformations are individually equivalent with certainty:
$
\Pr\bigl[E_{Y,X} \land E_{X,Y'}\bigr] = 1.
$
\end{theorem}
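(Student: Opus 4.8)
The plan is to work in the probability space that carries the triple $(Y,X,Y')$ with $X=M_1(Y)$ and $Y'=M_2(X)$, and to begin by making the three events well-typed. Since $\mathcal{R}\subseteq\mathcal{F}\times\mathcal{L}$ is literally only a code–NL relation, I would first promote it to a genuine equivalence relation $\sim$ on the disjoint union $\mathcal{F}\sqcup\mathcal{L}$: keep $f\sim l\iff (f,l)\in\mathcal{R}$, declare two codes (resp.\ two NL descriptions) equivalent iff they share a common partner class, and check reflexivity, symmetry and transitivity across both spaces. Under this reading $E_{Y,X}$, $E_{X,Y'}$ and $E_{Y,Y'}$ are all events about $\sim$, and the target is $\Pr[E_{Y,X}\wedge E_{X,Y'}]=1$ from the hypothesis $\Pr[E_{Y,Y'}]=1$ (note the converse implication is the trivial direction, by transitivity).

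The first reduction is purely relational. On the event $E_{Y,Y'}$ we have the biconditional $E_{Y,X}\iff E_{X,Y'}$ — if $Y\sim Y'$ and $Y\sim X$ then $X\sim Y'$, and symmetrically — so
\[
E_{Y,Y'}\setminus\bigl(E_{Y,X}\wedge E_{X,Y'}\bigr)=E_{Y,Y'}\wedge\neg E_{Y,X}\wedge\neg E_{X,Y'}.
\]
Because $\Pr[E_{Y,Y'}]=1$, it therefore suffices to prove $\Pr[\neg E_{Y,X}]=0$: this gives $\Pr[E_{Y,X}]=1$, and then on the probability-one event $E_{Y,Y'}\wedge E_{Y,X}$ transitivity forces $E_{X,Y'}$, so $\Pr[E_{X,Y'}]=1$ as well.

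To kill $\Pr[\neg E_{Y,X}]$ I would condition on $Y=f$ and invoke NLCDE. The hypothesis says the round trip from $\mathrm{Law}(Y)$-almost every $f$ lands almost surely back in the $\sim$-class $[f]$; consequently, along such a path, both $X$ and $Y'$ are determined up to semantic equivalence — any residual variation in the conditional output laws would have to be equivalence-preserving for $\Pr[E_{Y,Y'}]=1$ to hold. Reading the "$\Pr(\cdot\mid\cdot)=1$" premises of NLCDE at the level of equivalence classes, part 1 then gives $(f,X)\in\mathcal{R}$ a.s.\ (hence $E_{Y,X}$) and part 2 gives $(X,Y')\in\mathcal{R}$ a.s.\ (hence $E_{X,Y'}$), which combined with the reduction above closes the proof.

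The main obstacle lies precisely in that last step: NLCDE is stated only for outputs produced with probability one, while $M_1,M_2$ are genuinely probabilistic, so one must rule out "two cancelling errors" — a code-to-NL slip that produces an $X$ not describing $f$, undone by an NL-to-code slip whose output happens to fall back in $[f]$. I would resolve this either by reading NLCDE modulo $\sim$ (so "deterministic output" means "deterministic equivalence class"), which makes the cancellation impossible since $X$'s class is then pinned down and must be consistent with both $f$ and $Y'$, or, under the literal reading, by adding the mild consistency assumption that each model's conditional output law is supported on a single $\sim$-class. In either case the only delicate work is the measure-theoretic bookkeeping with conditional probabilities and null sets; everything else is transitivity of $\sim$.
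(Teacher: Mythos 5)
You take a genuinely different route from the paper: the paper's proof applies the Data Processing Inequality to the Markov chain $Y\to X\to Y'$, deduces $H(Y\mid X)=0$ from $\Pr[E_{Y,Y'}]=1$, and then invokes NLCDE to upgrade determinism to semantic equivalence, whereas you promote $\mathcal{R}$ to an equivalence relation and use transitivity to reduce everything to the single statement $\Pr[E_{Y,X}]=1$. That reduction is sound and more elementary, and you correctly put your finger on the crux: NLCDE constrains only \emph{deterministic} outputs, so one must exclude a code-to-NL error that is cancelled by an NL-to-code error. But the way you close that gap does not work. You assert that, because the round trip lands back in $[f]$ almost surely, ``both $X$ and $Y'$ are determined up to semantic equivalence,'' and then apply NLCDE part~1 to $M_1$. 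For $X$ this is assuming the conclusion: nothing in the hypothesis prevents $M_1(f)$ from spreading mass over two inequivalent descriptions $l_1\not\sim l_2$ so long as $M_2$ maps both back into $[f]$; in that scenario $\Pr[E_{Y,Y'}]=1$ holds, the conditional law of $X$ given $Y=f$ is not concentrated on one class, and NLCDE part~1 simply does not apply. Your fallback of adding the assumption that each model's conditional output law sits in a single $\sim$-class proves a different, weaker theorem.

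The repair stays entirely inside your framework but must go through $M_2$ rather than $M_1$. Because $Y\to X\to Y'$ is a Markov chain, $Y$ and $Y'$ are conditionally independent given $X$; writing $p_C=\Pr[Y\in C\mid X=l]$ and $q_C=\Pr[Y'\in C\mid X=l]$ over equivalence classes $C$, one has $\Pr[Y'\sim Y\mid X=l]=\sum_C p_C q_C\le\max_C q_C$, so $\Pr[E_{Y,Y'}]=1$ forces both conditional laws to concentrate on one common class $C(l)$ for almost every $l$. Hence $M_2$ \emph{is} deterministic modulo $\sim$; NLCDE part~2, read modulo $\sim$ as you propose (and as the paper also implicitly does when it passes from ``$E_{Y,Y'}$ a.s.''\ to $H(Y\mid Y')=0$), yields $\Pr[E_{X,Y'}]=1$, and your own transitivity step then delivers $\Pr[E_{Y,X}]=1$. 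This conditional-independence observation is exactly the elementary content of the paper's $H(Y\mid X)=0$ step; with it in place, your argument becomes a complete and arguably cleaner proof than the DPI version.
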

\begin{proof}[Proof Sketch]
This theorem can be proved by the Data Processing Inequality. Please refer to Appendix~\ref{sec:proof} for the detailed proof and further explanation.
\end{proof}

\subsubsection{Adaptive DAPO Algorithm}
\label{sec:RL_algorithm}

We enhance the DAPO algorithm \cite{yu2025dapo} with two efficiency improvements for RL fine-tuning on the distilled model (stage \circled{5}). The core DAPO loss operates on groups of $G$ responses per prompt:

\begin{align}
    \mathcal{L}_{DAPO}(\theta) = &\mathbb{E}_{(x,y^*) \sim \mathcal{D}, \{y_i\}_{i=1}^G \sim \pi_{\theta_{old}}(\cdot|x)} \nonumber \\
    &\left[ \frac{1}{\sum_{i=1}^G |y_i|} \sum_{i=1}^G \sum_{t=1}^{|y_i|} \min \left( r_{i,t}(\theta) \hat{A}_{i,t}, \text{clip} \left( r_{i,t}(\theta), 1-\epsilon_{low}, 1+\epsilon_{high} \right) \hat{A}_{i,t} \right) \right], \label{eq:dapo} \\
    \text{s.t.} \quad & 0 < |\{ y_i | \text{is\_equivalent}(y_i, y^*) \}| < G, \nonumber
\end{align}
where $r_{i,t}(\theta) = \frac{\pi_\theta(y_{i,t}|x, y_{i, <t})}{\pi_{\theta_{old}}(y_{i,t}|x, y_{i, <t})}$, $\hat{A}_{i,t} = \frac{R_i - \text{mean}(\{R_i\}_{i=1}^G) }{ std(\{R_i\}_{i=1}^G)}$ ($R_i$ is the reward to be introduced later)
, $r_{i,t}$ is the importance sampling ratio under the new policy $\pi_\theta$ compared to the old policy $\pi_{\theta_{old}}$, $\hat{A}_{i,t}$ is the group-relative advantage, $|y_i|$ is the length of response to calculate token-level loss, and $\epsilon_{low} < \epsilon_{high}$ are asymmetric clipping thresholds introduced in DAPO to encourage exploration. 
The constraint $0 < |\{ y_i | \text{is\_equivalent}(y_i, y^*) \}| < G$ ensures each training batch contains both correct and incorrect responses.
Note that we do not include the overlong filtering proposed by DAPO here.


    
    
    
    

A key feature of DAPO is the dynamic sampling mechanism. It notably improves the training result. However, the standard DAPO sampling strategy presents inefficiencies during sample generation: 
DAPO's fixed generation batch size (denoted as $b_{gen}$) is suboptimal. If too few partially correct samples are generated for the RL train batch size (denoted as $b_{train}$), costly re-sampling occurs; if too many are generated, excess samples are wasted. This problem intensifies as training progresses, and the improvement in model accuracy reduces the number of partially correct examples. 

We address this with an \textbf{adaptive batch size} mechanism utilizing a dynamically estimated sampling effective ratio, $r_{valid}$. 
Initially, $b_{gen}$ is set to $b_{train}$. After successfully accumulating a full training batch ($b_{train}$), we calculate the batch effective ratio ($\frac{\text{number of valid samples}}{b_{gen}}$). The value of $r_{valid}$ is then updated to the \textit{minimum} of itself and the batch effective ratio. For the subsequent sampling phase, the generation batch size is adaptively set to $b_{gen} = \lceil \frac{b_{train}}{r_{valid}} \rceil$. 
The detailed process is given in Appendix~\ref{sec:adaptive_dapo}. 
Note that this acceleration does not involve offline updates or alter the composition of the RL training batch, so it preserves DAPO's accuracy while accelerating training.

We implement a rule-based reward function that evaluates both structural correctness and semantic equivalence. A response $y_i$ receives a reward of 1 if it satisfies two conditions:
(1) Proper formatting as ``<think>reasoning</think><answer>solution</answer>''
(2) Semantic equivalence with the golden code $y^*$ judged by the equivalence checker introduced in Section \ref{sec:equivalence_checking}.
The reward function $R(y, y^*)$ is 1 if $y$ has a correct format and $(y, y^*)$ are functional equivalent, and 0 otherwise.





\section{Experiments}
\label{sec:experiments}

This section details the implementation of our method and presents comprehensive experimental results. 
We systematically evaluate our model through multiple dimensions: comparisons with prior state-of-the-art approaches, test-time scaling analysis across varying response length constraints, ablation studies analyzing the impact of golden code correctness and problem complexity, acceleration effects of the adaptive DAPO mechanism, and testbench performance evaluation.
These analyses collectively demonstrate the effectiveness and efficiency of our proposed approach.


\subsection{Implementation details}
\label{sec:implementation_details}

We obtain our final model by first distilling \RR and then applying RL on our curated \sizeRL dataset.
During distillation, we employ LLaMAFactory \cite{zheng2024llamafactory} to supervised fine-tune (SFT) Qwen2.5-Coder-7B-Instruct using the \sizeDistill dataset filtered for distillation. We train the model for 6 epochs with a learning rate of $1 \times 10^{-5}$ and a batch size of 64. The total context length is set to 16384 during distillation.
During RL, we use the verl \cite{sheng2024hybridflow} framework to further train the distilled model with our adaptive DAPO.
We use a batch size of 128, a learning rate of $1 \times 10^{-6}$, and train for 300 steps. The rollout temperature is set to $1.0$.
During this stage, the max length is set to 2048 for instruction and 16384 for response.
The SFT stage is executed on 8 A100-80G GPUs, taking approximately 78 hours, while the RL stage runs on 16 A100-80G GPUs, requiring around 127 hours of computation.
The whole parameter setting is provided in Appendix~\ref{sec:parameter}.

We test our distillation and RL model on various Verilog benchmarks, including VerilogEval v1 \cite{liu2023verilogeval} / v2 \cite{verilogeval_v2} and RTLLM v1.1 \cite{rtllm} / v2 \cite{rtllm_v2}. For VerilogEval v2, we examine zero-shot scenarios in both specification-to-RTL translation and code completion tasks. 
The maximum context length is configured to 16384 tokens during the evaluation phase for all benchmarks. The temperature during generation is 0.6 for the distillation model and 1.0 for the RL model, and 20 responses are generated per query to estimate the pass@k score for both VerilogEval and RTLLM.


\subsection{Main Results}
\label{sec:main_results}

\begin{table*}[h]
\caption{Comparison of \modelFinal against baselines on VerilogEval v1 and RTLLM v1.1.}
\label{tab:main_exp}
\centering
\footnotesize
\scalebox{0.71}{ 
{%
    \begin{tabular}{clccccccccc}
    \toprule
    \multirow{2}{*}{Type} & \multirow{2}{*}{Model} & \multirow{2}{*}{\begin{tabular}[c]{@{}c@{}}Open\\ source\end{tabular}} & \multicolumn{3}{c}{VerilogEval-Machine (\%)} & \multicolumn{3}{c}{VerilogEval-Human (\%)} & \multicolumn{2}{c}{RTLLM v1.1 (\%)} \\
    & & &  pass@1 & pass@5 & pass@10 & pass@1 & pass@5 & pass@10 & pass@1 & pass@5 \\ 
    \midrule
    \multirow{8}{*}{\begin{tabular}[c]{@{}c@{}}Foundation\\ Models\end{tabular}}
    & GPT-4o* & \text{\texttimes} & 67.7 & 75.5 & 77.2 & 60.1 & 71.4 & 74.5 & 41.7 & 65.9 \\
    & DeepSeek-R1-671B* & \checkmark & \underline{81.0} & 87.4 & \underline{89.5} & \textbf{81.5} & \textbf{87.6} & \textbf{88.5} & \underline{64.8} & \underline{82.9} \\
    & DeepSeek-V3-671B* & \checkmark & 80.8 & 87.5 & 88.8 & 68.7 & \underline{79.7} & \underline{82.1} & 60.9 & 74.2 \\
    & QWQ-32B* & \checkmark & 71.1 & 84.0 & 87.0 & 63.8 & 78.0 & 81.3 & 50.9 & 70.6 \\
    & DeepSeek-R1-Distill-Qwen-32B* & \checkmark & 64.7 & 80.5 & 83.6 & 51.3 & 68.1 & 72.2 & 42.1 & 64.3 \\
    & DeepSeek-R1-Distill-Qwen-7B* & \checkmark & 5.3 & 16.9 & 24.9 & 1.6 & 6.3 & 10.1 & 0.0 & 0.0 \\
    & Qwen2.5-Coder-32B-Instruct* & \checkmark & 66.6 & 76.6 & 79.7 & 47.6 & 58.1 & 61.8 & 47.9 & 67.7 \\
    & Qwen2.5-Coder-7B-Instruct* & \checkmark & 60.2 & 77.8 & 82.4 & 31.9 & 46.3 & 50.2 & 32.2 & 48.2 \\
     \midrule
    \multirow{11}{*}{\begin{tabular}[c]{@{}c@{}}Specialized\\ Models\end{tabular}}
    & RTLCoder-Mistral-7B & \checkmark & 62.5 & 72.2 & 76.6 & 36.7 & 45.5 & 49.2 & {-} & 48.3 \\
    & RTLCoder-DS-6.7B & \checkmark & 61.2 & 76.5 & {81.8} & 41.6 & 50.1 & 53.4 & {-} & 48.3 \\
    & BetterV-CL-7B & \text{\texttimes} & 64.2 & 75.4 & 79.1 & 40.9 & 50.0 & 53.3 & - & - \\
    & BetterV-DS-6.7B & \text{\texttimes} & 67.8 & 79.1 & 84.0 & 45.9 & 53.3 & 57.6 & - & - \\
    & BetterV-CQ-7B & \text{\texttimes} & 68.1 & 79.4 & {84.5} & 46.1 & 53.7 & 58.2 & - & - \\
    & CodeV-CL-7B & \checkmark & 78.1 & 86.0 & 88.5 & 45.2 & 59.5 & 63.8 & 39.4 & 62.1 \\
    & CodeV-DS-6.7B & \checkmark & 77.9 & \textbf{88.6} & \textbf{90.7} & 52.7 & 62.5 & 67.3 & 42.4 & 55.2 \\
    & CodeV-CQ-7B & \checkmark & 77.6 & \underline{88.2} & \textbf{90.7} & 53.2 & 65.1 & 68.5 & 36.6 & 55.2 \\
    & CraftRTL-CL-7B & \text{\texttimes} & 78.1 & 85.5 & 87.8 & 63.1 & 67.8 & 69.7 & 42.6 & 52.9 \\
    & CraftRTL-DS-6.7B & \text{\texttimes} & 77.8 & 85.5 & 88.1 & 65.4 & 70.0 & 72.1 & 53.1 & 58.8 \\
    & CraftRTL-SC2-15B & \text{\texttimes} & \textbf{81.9} & 86.9 & 88.1 & 68.0 & 72.4 & 74.6 & 49.0 & 65.8 \\
     \midrule
    \rowcolor[rgb]{0.925,0.925,0.925}
    & \modelDistill & \checkmark & 76.2 & 85.6 & 87.0 & 65.7 & 76.8 & 79.7 & 57.4 & 75.8 \\ 
    \rowcolor[rgb]{0.925,0.925,0.925}
    \multirow{-2}{*}{Ours} & \modelFinal & \checkmark & 76.5 & 84.1 & 85.7 & \underline{69.9} & 79.3 & 81.7 & \textbf{72.9} & \textbf{86.1} \\ 
     \bottomrule
    \end{tabular}%
}
}

\begin{tablenotes}
\item $^{\rm *}$ We evaluate the models with *, while other results are sourced from their papers.
\end{tablenotes}
\vspace{-10pt}
\end{table*}

\begin{table*}[h]
\caption{Comparison of \modelFinal on VerilogEval v2 and RTLLM v2.}
\label{tab:main_exp_2}
\centering
\footnotesize
\scalebox{0.63}{ 
\begin{tabular}{clc ccc ccc ccc} 
\toprule
\multirow{2}{*}{Type} & \multirow{2}{*}{Model} & \multirow{2}{*}{\begin{tabular}[c]{@{}c@{}}Open\\ source\end{tabular}} & \multicolumn{3}{c}{VerilogEval2-SR (\%)} & \multicolumn{3}{c}{VerilogEval2-CC (\%)} & \multicolumn{3}{c}{RTLLM v2 (\%)} \\ 
& & & pass@1 & pass@5 & pass@10 & pass@1 & pass@5 & pass@10 & pass@1 & pass@5 & pass@10 \\ 
\midrule
\multirow{8}{*}{\begin{tabular}[c]{@{}c@{}}Foundation\\ Models\end{tabular}}
& GPT-4o & \text{\texttimes} & 64.1 & 73.7 & 76.2 & 57.6 & 66.1 & 69.0 & 56.5 & 70.3 & 75.2 \\ 
& DeepSeek-R1-671B & \checkmark & \textbf{77.5} & \textbf{84.7} & \textbf{87.4} & \textbf{79.1} & \textbf{85.1} & \textbf{87.1} & \underline{64.7} & \underline{75.8} & \underline{79.7} \\ 
& DeepSeek-V3-671B & \checkmark & 62.4 & 71.7 & 75.0 & 68.7 & 76.3 & 78.2 & 59.1 & 71.5 & 73.3 \\ 
& QWQ-32B & \checkmark & 64.2 & 77.3 & 80.1 & 64.0 & 77.8 & \underline{80.9} & 52.9 & 68.0 & 71.2 \\ 
& DeepSeek-R1-Distill-Qwen-32B & \checkmark & 43.9 & 63.3 & 69.2 & 53.8 & 69.8 & 73.8 & 42.4 & 62.1 & 67.0 \\ 
& DeepSeek-R1-Distill-Qwen-7B & \checkmark & 0.6 & 2.2 & 3.5 & 2.0 & 7.0 & 11.3 & 0.0 & 0.0 & 0.0 \\ 
& Qwen2.5-Coder-32B-Instruct & \checkmark & 47.5 & 60.7 & 64.7 & 46.6 & 59.0 & 62.8 & 47.8 & 63.9 & 67.8 \\ 
& Qwen2.5-Coder-7B-Instruct & \checkmark & 31.3 & 49.3 & 54.6 & 30.5 & 46.8 & 52.0 & 36.1 & 52.4 & 57.6 \\ 
\midrule
\multirow{1}{*}{Specialized Models}
& RTLCoder-DS-6.7B & \checkmark & 31.1 & 47.8 & 52.3 & 33.7 & 45.9 & 49.8 & 33.6 & 45.3 & 49.2 \\ 
\midrule
\rowcolor[rgb]{0.925,0.925,0.925}
& \modelDistill & \checkmark & 65.2 & 75.2 & 77.5 & 65.5 & 75.6 & 78.2 & 57.2 & 71.9 & 77.1 \\ 
\rowcolor[rgb]{0.925,0.925,0.925}
\multirow{-2}{*}{Ours} & \modelFinal & \checkmark & \underline{68.8} & \underline{78.2} & \underline{81.1} & \underline{69.9} & \underline{78.2} & \underline{80.9} & \textbf{68.0} & \textbf{78.2} & \textbf{81.7} \\ 
\bottomrule
\end{tabular}
}

\begin{tablenotes}
\item $^{\rm *}$ We evaluate all models in this table. SR: Specification-to-RTL; CC: Code Completion.
\end{tablenotes}
\vspace{-15pt}
\end{table*}

Our main experimental results are shown in Table~\ref{tab:main_exp} and Table~\ref{tab:main_exp_2}.
We evaluate DeepSeek-R1~\cite{deepseekr1}, DeepSeek-V3~\cite{deepseekv3}, QWQ-32B~\cite{qwq32b}, DeepSeek-R1-Distill-Qwen-32B~\cite{deepseekr1}, DeepSeek-R1-Distill-Qwen-7B~\cite{deepseekr1}, Qwen2.5-Coder-32B-Instruct~\cite{qwen2.5}, Qwen2.5-Coder-7B-Instruct~\cite{qwen2.5}, and GPT-4o \cite{openai2024gpt4ocard} on VerilogEval  and RTLLM.
Meanwhile, we adopt results reported by RTLCoder~\cite{RTLCoder}, BetterV~\cite{pei2024betterv}, CodeV~\cite{codev}, CraftRTL~\cite{liu2024craftrtl} from their papers.
The results demonstrate that:

\paragraph{Our model achieves state-of-the-art (SOTA) performance among Verilog-domain models on most benchmarks.} 
Our model has a significant advantage over previous Verilog-domain models on RTLLM v1.1, outperforming the previous SOTA model, CraftRTL-DS-6.7B, by 18.8\% on the pass@1 metric.
On VerilogEval v1-Human, although the performance improvement compared to the previous SOTA model, CraftRTL-SC2-15B, is not substantial, our model has a smaller size (7B) compared to theirs (15B). Among 7B models, we outperform the previous best model (CraftRTL-DS-6.7B) by 4.5\% on pass@1.
Although our model does not perform well on VerilogEval-Machine, this benchmark is relatively easy, and even \RR does not have a significant advantage on it.

\paragraph{Our model demonstrates superior performance over most foundation models across both benchmarks.} Although it does not surpass the \RR model—the primary source for knowledge distillation—on most benchmarks, it consistently exceeds the performance of other foundation models. A key finding is that after applying reinforcement learning (RL), \textbf{our model outperforms \RR on both RTLLM-v1.1 and RTLLM-v2}, underscoring the significant efficacy of the RL phase. The underwhelming results of other foundation models, such as Qwen2.5-Coder-Instruct and DeepSeek-R1-Distill-Qwen, highlight the limited exposure to Verilog data during their pre-training and instruction-tuning stages. This is further evidenced by the observation that distilling general-purpose knowledge from large models (e.g., in mathematics and software code) fails to enhance the Verilog capabilities of smaller models.

\paragraph{Reinforcement learning significantly improves model performance.} Compared with \modelDistill, our RL model \modelFinal shows a noticeable improvement on almost all benchmarks.
Especially on the RTLLM benchmark, the reinforcement learning process results in an improvement of over 10 \% for the pass@1 score.
This indicates great potential of RL for Verilog code generation and showcases the robustness of our testbench in providing reliable functional correctness rewards.

\subsection{Additional Experiments}

\subsubsection{Test-Time Scaling}

\begin{figure}[h]
    \centering
    \begin{subfigure}{0.45\linewidth}
    \includegraphics[width=\linewidth]{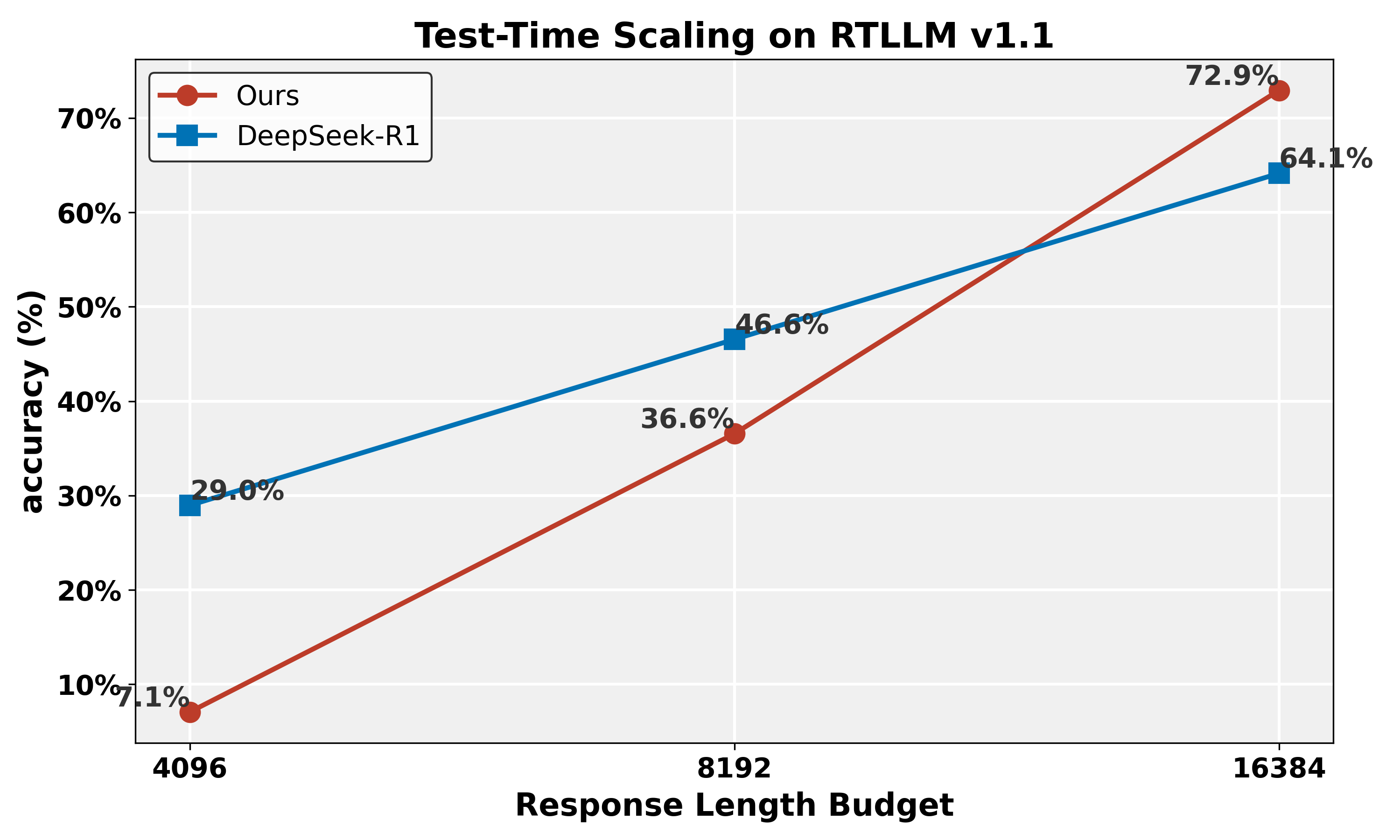}
    \caption{\label{fig:tts_length}}
    \end{subfigure}
    \hspace{0.03\linewidth} 
    \begin{subfigure}{0.45\linewidth}
    \includegraphics[width=\linewidth]{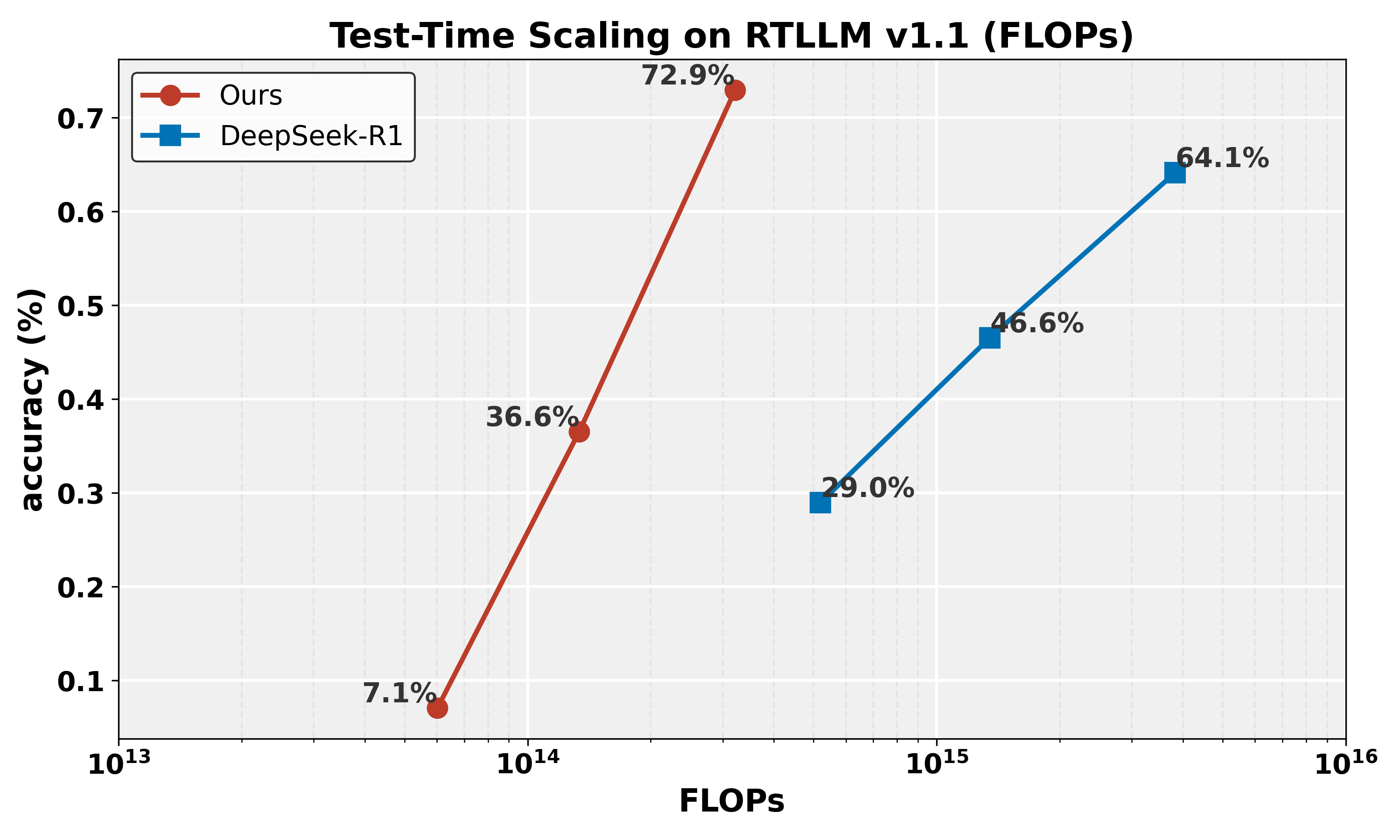}
    \caption{\label{fig:tts_flops}}
    \end{subfigure}
    \caption{\textbf{Test-time scaling on RTLLM v1.1.} Figure~(a) shows response length against accuracy, while Figure~(b) shows FLOPs against accuracy. FLOPs are estimated according to model architecture.}
    \label{fig:tts}
\end{figure}

Test-time scaling is an important ability of reasoning LLMs~\cite{muennighoff2025s1}. 
To verify the test-time scaling ability of our \modelFinal, we take the RTLLM v1.1 dataset as an example and evaluate the accuracy of our model and \RR under varying response length budgets.
Formally, we force the response length of both models to be smaller than certain thresholds (4096, 8192 and 16384 tokens), and plot the corresponding results in Figure~\ref{fig:tts_length}.
To ensure fair comparison, we also normalized FLOPs consumption at each response length, as shown in Figure~\ref{fig:tts_flops}.

Both models' accuracy improves considerably as the response length budget increases from 4096 to 16384. \modelFinal's accuracy rises from 7.1\% to 72.9\%, outperforming \RR (29.0\% $\to$ 64.1\%). 
When evaluated in terms of FLOPs efficiency, \modelFinal demonstrated superior computational economy, delivering higher accuracy per unit of computation compared to \RR.
These results underscore \modelFinal's exceptional test-time scaling efficiency, showcasing its ability to leverage longer contexts more effectively than \RR while consuming fewer computational resources on the RTLLM v1.1 benchmark.

\subsubsection{Equivalence Checking and Difficulty Filtering Improves RL Training}

To explore whether equivalence checking and difficulty filtering improve RL dataset quality, we conduct an ablation study by constructing two additional datasets.

Our original RL dataset contains \sizeRL problems where DeepSeek-R1 responses pass the equivalence checking, while both Qwen2.5-Coder-7B-Instruct and Qwen2.5-Coder-32B-Instruct fail across five sampling attempts. 
To conduct difficulty ablation, we introduce a \textbf{dataset without difficulty filtering} containing 16K problems, where we additionally include samples where Qwen2.5 models succeed in some attempts under our testbench.
To conduct reference code correctness ablation, we introduce a \textbf{dataset without round-trip equivalence checking} containing 14K samples, where we treat DeepSeek-R1 outputs as pseudo-golden code. We select cases where Qwen2.5 models fail to match this pseudo-golden code in five attempts to control difficulty.
To avoid time waste, we filter the problems where \modelDistill has a 100\% pass rate under our testbench in five attempts.

\begin{figure}[h]
  \centering
  \begin{subfigure}{0.4\linewidth}
    \centering
    \includegraphics[width=\linewidth]{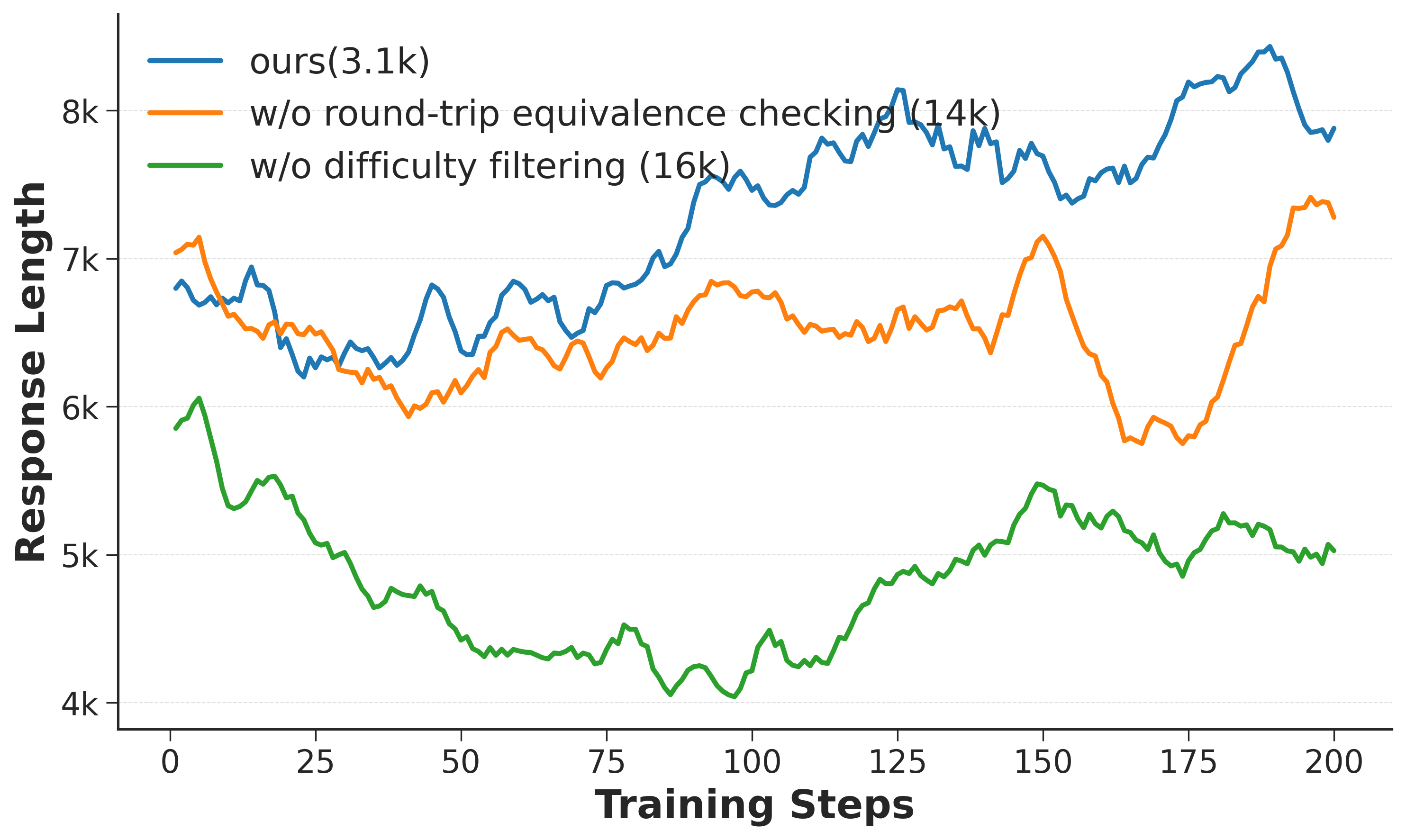}
    \caption{\label{fig:ablation_train_resp_len}}
  \end{subfigure}
  \hspace{0.1\linewidth} 
  \begin{subfigure}{0.4\linewidth}
    \centering
    \includegraphics[width=\linewidth]{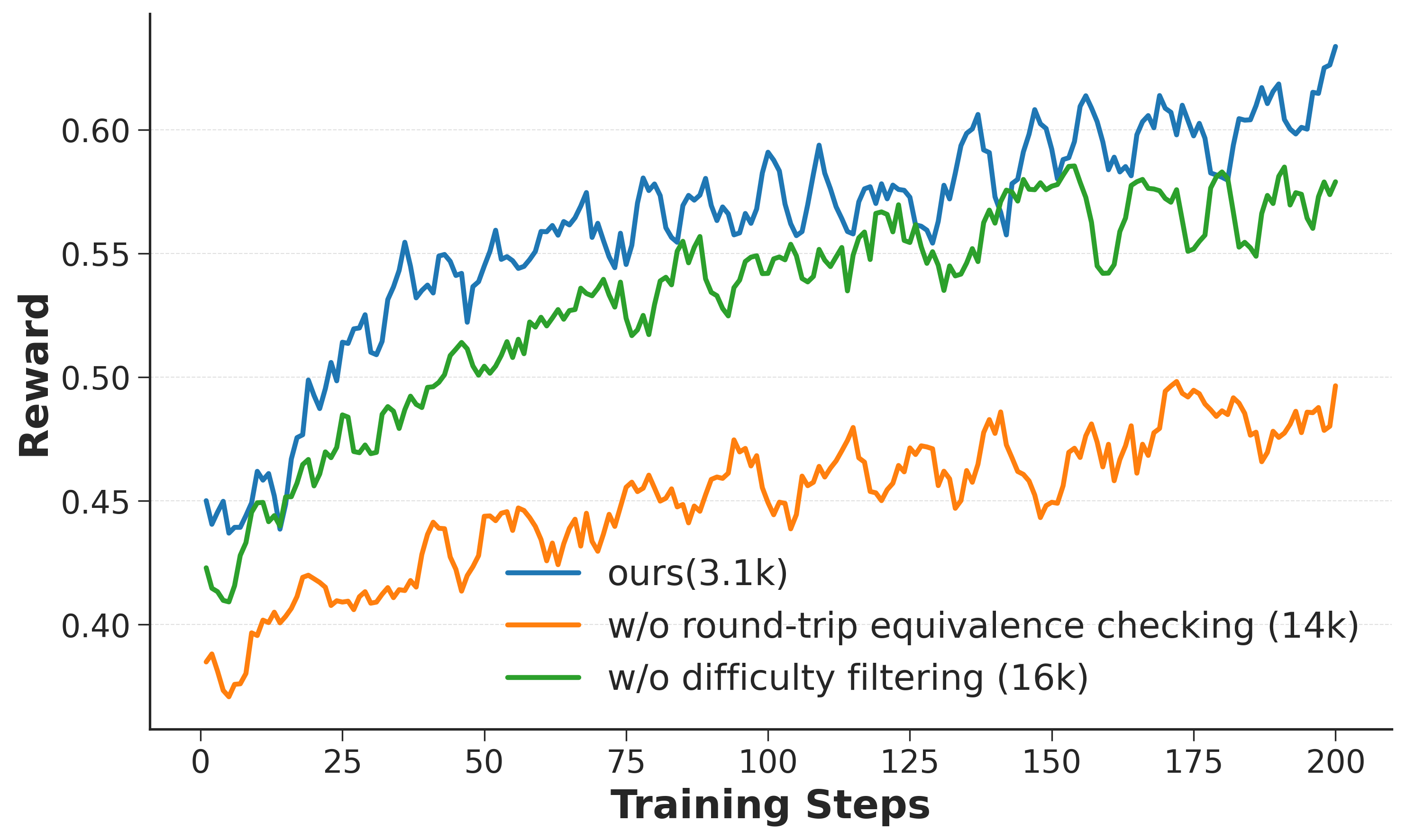}
    \caption{\label{fig:ablation_train_reward}}
  \end{subfigure}

  \caption{\textbf{Train-time scale up on some key metrics.} Figure (a) tracks response length, whereas Figure (b) presents the corresponding trend for reward.}
  \label{fig:ablation_train}
  \vspace{-10pt}
\end{figure}



We perform reinforcement learning using \modelDistill on the three aforementioned datasets, employing identical training parameters. Key metrics observed during these training processes are presented in Figure~\ref{fig:ablation_train}.
Inspection of Figure~\ref{fig:ablation_train_resp_len} reveals distinct trends in response length during training. Utilizing the original RL dataset leads to a noticeable subsequent increase in response length, whereas the training dataset without difficulty filtering leads to a segment of response decrease. This suggests that even when initial responses are relatively long, incorporating more challenging samples during reinforcement learning facilitates further steady growth in response length. Figure~\ref{fig:ablation_train_reward} illustrates that the pseudo-golden dataset consistently exhibits notably lower reward throughout the training process compared to our original RL dataset. This underscores the critical role of golden code accuracy during reinforcement learning.


\subsubsection{Acceleration via Adaptive DAPO}

To quantitatively demonstrate the acceleration achieved by our adaptive DAPO algorithm, we provide a comparison of time usage in Figure~\ref{fig:adaptive_dapo_comparison_all}.
The plots reveal a notable increase in the time per RL step in baseline DAPO training around step 150. This performance degradation in the baseline is attributed to its fixed generation batch size, which becomes insufficient to yield enough samples for a complete training batch as training progresses. 
In contrast, our adaptive DAPO effectively mitigates this issue. It dynamically adjusts and increases the generation batch size across steps. In addition, when a generation attempt does not produce sufficient valid samples for a training batch, the algorithm recalculates the required remaining batch size. 
In Figure~\ref{fig:adaptive_dapo_comparison_avg}, we provide the average speedup of adaptive DAPO, along with a breakdown of performance before and after step 150.
Notably, the time reduction after step 150 is significantly more pronounced—the speedup factor reaches 1.44 after step 150, compared to 1.04 before step 150. This disparity highlights the critical benefit of eliminating sampling more than once.
After applying adaptive DAPO, the final speedup factor reaches 1.25×.

\begin{figure}[h]
  \centering
  \begin{subfigure}{0.4\linewidth}
    \centering
    \includegraphics[width=\linewidth]{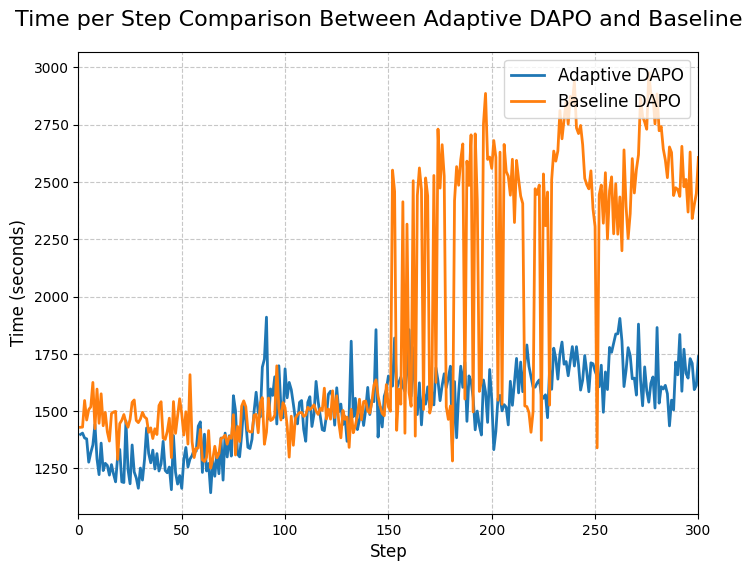}
    \caption{\label{fig:adaptive_dapo_comparison_all}}
  \end{subfigure}
  \hspace{0.1\linewidth} 
  \begin{subfigure}{0.4\linewidth}
    \centering
    \includegraphics[width=\linewidth]{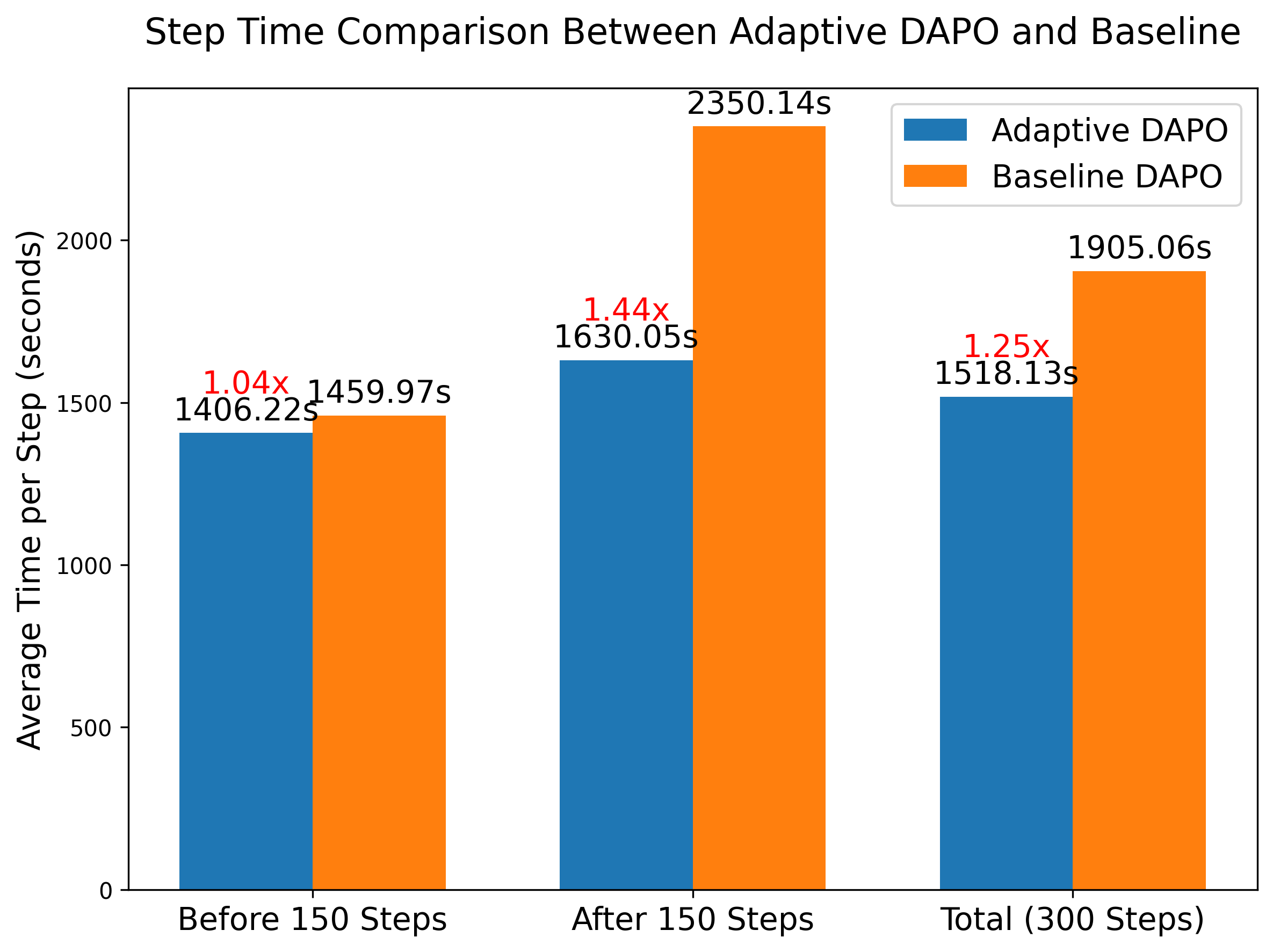}
    \caption{\label{fig:adaptive_dapo_comparison_avg}}
  \end{subfigure}
  \caption{\textbf{Time comparison between adaptive DAPO and baseline DAPO.} \textbf{(a)}: Comparison of RL training time per step. \textbf{(b)}: Acceleration ratio between adaptive DAPO and baseline DAPO, breakdown by step (whether before 150).}
  \label{fig:adaptive_dapo_comparison}
\vspace{-10pt}
\end{figure}



\subsubsection{Testbench Performance Evaluation} \label{sec:testbench}

We evaluate our auto-testbench generation framework against a \DSIII-generated testbench, both taking the Verilog code from GitHub as the golden reference. We conduct two key tests:

\textbf{Correctness classification test.}
We assess whether the testbenches might misclassify correct code as incorrect. To do this, we validate with "golden vs. golden" inputs (i.e., comparing the golden code against itself). The expected outcome is 100\% correct classification. Our method misclassifies only 0.3\% of cases (due to problems of Icarus Verilog simulation and randomization issues) — a 96.1\% reduction in false negatives compared to the \DSIII-generated testbench (7.6\%).

\textbf{Fuzzing test for sequential circuits.}
Second, we perform a fuzzing test on sequential circuits by instructing \DSIII to inject subtle errors into the golden code. The goal is to measure how effectively each testbench detects these mistakes. Our testbench detects 65\% of the injected errors, demonstrating a 62.5\% relative improvement in detection rate over the \DSIII testbench (40\%) and indicating fewer false positives.

\section{Related Work}

\subsection{Large Language Models for Reasoning}

The \OO \cite{openaio1} series is the first closed-source model trained with large-scale reinforcement learning to perform reasoning through CoT. Inspired by its powerful and effective reinforcement learning training paradigm, QwQ \cite{qwq32b}, \RR \cite{deepseekr1}, and Kimi~k1.5 \cite{kimik1.5} have all adopted and improved upon its approach, achieving promising results.
Limited by computational resources, open-source communities have actively explored low-cost approaches to replicate o1-like reasoning models. Some efforts have focused on distilling the powerful closed-source reasoning models \cite{sky_t1_2025, muennighoff2025s1, ye2025limo, huang2024o1replicationjourney, min2024imitate}. 
while others have also explored training reasoning models using reinforcement learning \cite{hu2025openreasonerzero, zeng2025simplerl, deepscaler2025, deepcoder2025, xie2025logicrl, tinyzero, jin2025search, Fin-R1}. 

The main difference between \modelFinal and the aforementioned reasoning models lies in its focus on hardware description language code generation, which poses unique challenges due to \textbf{verification difficulty} and \textbf{limited data quality}. In contrast, prior works primarily specialised in domains such as mathematics, which benefit from easily verifiable numerical outputs and rich open-source datasets.

\subsection{Large Language Models for Verilog Code Generation}

With the development of large language models, specialised code generation models for hardware description languages also receive widespread attention. Many prior works \cite{RTLCoder, codev, pei2024betterv, cui2024origen, liu2024craftrtl} focus on Verilog instruction-tuning data creation \textbf{without a strict correctness evaluation}. Most works have a syntax check in constructing instruction-response pairs: RTLCoder \cite{RTLCoder} and CodeV \cite{codev} add syntax checks when constructing supervised fine-tuning (SFT) datasets with closed-source LLMs. BetterV \cite{pei2024betterv} maps code across languages using Verilog syntax constraints, while OriGen \cite{cui2024origen} leverages compiler feedback to eliminate syntax errors. For functional correctness, to date, only CraftRTL's correct-by-construction approach \cite{liu2024craftrtl} ensures functional correspondence between instruction and response through formal verification. However, its applicability remains restricted to Karnaugh maps and finite-state machines, a narrow subset of Verilog design challenges.

This verification bottleneck shifts to the model optimization stage. Specifically, reinforcement learning with rule-based rewards attempts to address functional correctness by relying on testbenches for reward calculation. However, this strategy is undermined by the fact that current testbench generation paradigms suffer from two systemic flaws:
(1) \textbf{Unverified validation frameworks}:
For example, VeriPrefer \cite{wang2025veriprefer} optimizes testbench coverage, but its testbenches themselves may be flawed, sometimes failing to pass the reference code they were designed to verify.
ReasoningV \cite{qin2025reasoningv} co-generates code and testbenches via DeepSeek-R1, inheriting the model’s hallucination risks.
(2) \textbf{Cost-prohibitive iteration}:
AutoBench \cite{qiu2024autobench} and CorrectBench \cite{qiu2024correctbench} employ multi-stage LLM workflows, where each self-correction cycle incurs escalating computational costs and latency, directly conflicting with RL’s demand for rapid, low-cost reward feedback.

Unlike prior work, we apply Verilog functional verification with auto-generated equivalence checking (see Section \ref{sec:equivalence_checking}), providing a robust foundation for both data curation and reinforcement learning.


\section{Conclusion}

In this paper, we propose \xname, a unified RLVR framework designed for training Verilog generation LLMs. This framework first distills data with reasoning patterns and then applies reinforcement learning on high-quality data curated by an automated testbench generation framework. The model trained via this framework, \modelname, achieves outstanding performance on Verilog generation benchmarks like VerilogEval and RTLLM, matching or even surpassing DeepSeek-R1, which demonstrates the effectiveness of the automated testbench generation and the two-stage training paradigm. 
Notably, to the best of our knowledge, \modelname~is the first Verilog-domain LLM to demonstrate strong reasoning capability showing consistent test-time scaling behavior.
A series of analytical experiments further highlights the powerful impact of \xname~framework in enhancing data quality and further unlocking the Verilog code generation capabilities of LLMs through reasoning.

\FloatBarrier
\newpage

\section*{Acknowledgements}

This work is partially supported by the Strategic Priority Research Program of the Chinese Academy of Sciences (Grants No.XDB0660300, XDB0660301, XDB0660302), the NSF of China (Grants No.62341411, 62222214, 62525203, U22A2028, 6240073476), CAS Project for Young Scientists in Basic Research (YSBR-029) and Youth Innovation Promotion Association CAS.

\bibliography{references}
\bibliographystyle{plainnat}
\FloatBarrier




\newpage

\newpage
\section*{NeurIPS Paper Checklist}

\begin{enumerate}

\item {\bf Claims}
    \item[] Question: Do the main claims made in the abstract and introduction accurately reflect the paper's contributions and scope?
    \item[] Answer: \answerYes{} 
    \item[] Justification: The abstract and introduction accurately describe the paper's contributions, including the equivalence checking, dataset filtering based on this equivalence checking, and adaptive DAPO. These are detailed in Section ~\ref{sec:introduction} (Introduction) and Section~\ref{sec:method} (Methods).
    \item[] Guidelines:
    \begin{itemize}
        \item The answer NA means that the abstract and introduction do not include the claims made in the paper.
        \item The abstract and/or introduction should clearly state the claims made, including the contributions made in the paper and important assumptions and limitations. A No or NA answer to this question will not be perceived well by the reviewers. 
        \item The claims made should match theoretical and experimental results, and reflect how much the results can be expected to generalize to other settings. 
        \item It is fine to include aspirational goals as motivation as long as it is clear that these goals are not attained by the paper. 
    \end{itemize}

\item {\bf Limitations}
    \item[] Question: Does the paper discuss the limitations of the work performed by the authors?
    \item[] Answer: \answerYes{} 
    \item[] Justification: The limitations of this work primarily include the cost of data synthesis and potential issues with low-quality data, which are discussed in the Appendix.
    \item[] Guidelines:
    \begin{itemize}
        \item The answer NA means that the paper has no limitation while the answer No means that the paper has limitations, but those are not discussed in the paper. 
        \item The authors are encouraged to create a separate "Limitations" section in their paper.
        \item The paper should point out any strong assumptions and how robust the results are to violations of these assumptions (e.g., independence assumptions, noiseless settings, model well-specification, asymptotic approximations only holding locally). The authors should reflect on how these assumptions might be violated in practice and what the implications would be.
        \item The authors should reflect on the scope of the claims made, e.g., if the approach was only tested on a few datasets or with a few runs. In general, empirical results often depend on implicit assumptions, which should be articulated.
        \item The authors should reflect on the factors that influence the performance of the approach. For example, a facial recognition algorithm may perform poorly when image resolution is low or images are taken in low lighting. Or a speech-to-text system might not be used reliably to provide closed captions for online lectures because it fails to handle technical jargon.
        \item The authors should discuss the computational efficiency of the proposed algorithms and how they scale with dataset size.
        \item If applicable, the authors should discuss possible limitations of their approach to address problems of privacy and fairness.
        \item While the authors might fear that complete honesty about limitations might be used by reviewers as grounds for rejection, a worse outcome might be that reviewers discover limitations that aren't acknowledged in the paper. The authors should use their best judgment and recognize that individual actions in favor of transparency play an important role in developing norms that preserve the integrity of the community. Reviewers will be specifically instructed to not penalize honesty concerning limitations.
    \end{itemize}

\item {\bf Theory assumptions and proofs}
    \item[] Question: For each theoretical result, does the paper provide the full set of assumptions and a complete (and correct) proof?
    \item[] Answer: \answerYes{} 
    \item[] Justification: Theorem 2.1 (Semantic Equivalence in Chain Transformations) is provided under the NLCDE assumption (Definition 2.1).
    \item[] Guidelines:
    \begin{itemize}
        \item The answer NA means that the paper does not include theoretical results. 
        \item All the theorems, formulas, and proofs in the paper should be numbered and cross-referenced.
        \item All assumptions should be clearly stated or referenced in the statement of any theorems.
        \item The proofs can either appear in the main paper or the supplemental material, but if they appear in the supplemental material, the authors are encouraged to provide a short proof sketch to provide intuition. 
        \item Inversely, any informal proof provided in the core of the paper should be complemented by formal proofs provided in appendix or supplemental material.
        \item Theorems and Lemmas that the proof relies upon should be properly referenced. 
    \end{itemize}

    \item {\bf Experimental result reproducibility}
    \item[] Question: Does the paper fully disclose all the information needed to reproduce the main experimental results of the paper to the extent that it affects the main claims and/or conclusions of the paper (regardless of whether the code and data are provided or not)?
    \item[] Answer: \answerYes{} 
    \item[] The paper describes the dataset creation process, RL algorithm, evaluation metrics, LLMs used, experimental setup including SFT, RL, and evaluation parameters (Sections~\ref{sec:method} and~\ref{sec:experiments}).
    \item[] Guidelines:
    \begin{itemize}
        \item The answer NA means that the paper does not include experiments.
        \item If the paper includes experiments, a No answer to this question will not be perceived well by the reviewers: Making the paper reproducible is important, regardless of whether the code and data are provided or not.
        \item If the contribution is a dataset and/or model, the authors should describe the steps taken to make their results reproducible or verifiable. 
        \item Depending on the contribution, reproducibility can be accomplished in various ways. For example, if the contribution is a novel architecture, describing the architecture fully might suffice, or if the contribution is a specific model and empirical evaluation, it may be necessary to either make it possible for others to replicate the model with the same dataset, or provide access to the model. In general. releasing code and data is often one good way to accomplish this, but reproducibility can also be provided via detailed instructions for how to replicate the results, access to a hosted model (e.g., in the case of a large language model), releasing of a model checkpoint, or other means that are appropriate to the research performed.
        \item While NeurIPS does not require releasing code, the conference does require all submissions to provide some reasonable avenue for reproducibility, which may depend on the nature of the contribution. For example
        \begin{enumerate}
            \item If the contribution is primarily a new algorithm, the paper should make it clear how to reproduce that algorithm.
            \item If the contribution is primarily a new model architecture, the paper should describe the architecture clearly and fully.
            \item If the contribution is a new model (e.g., a large language model), then there should either be a way to access this model for reproducing the results or a way to reproduce the model (e.g., with an open-source dataset or instructions for how to construct the dataset).
            \item We recognize that reproducibility may be tricky in some cases, in which case authors are welcome to describe the particular way they provide for reproducibility. In the case of closed-source models, it may be that access to the model is limited in some way (e.g., to registered users), but it should be possible for other researchers to have some path to reproducing or verifying the results.
        \end{enumerate}
    \end{itemize}

\item {\bf Open access to data and code}
    \item[] Question: Does the paper provide open access to the data and code, with sufficient instructions to faithfully reproduce the main experimental results, as described in supplemental material?
    \item[] Answer: \answerYes{} 
    \item[] Justification: We have released our complete model, code, and dataset on Huggingface (model/dataset) and Github (code).
    \item[] Guidelines:
    \begin{itemize}
        \item The answer NA means that paper does not include experiments requiring code.
        \item Please see the NeurIPS code and data submission guidelines (\url{https://nips.cc/public/guides/CodeSubmissionPolicy}) for more details.
        \item While we encourage the release of code and data, we understand that this might not be possible, so “No” is an acceptable answer. Papers cannot be rejected simply for not including code, unless this is central to the contribution (e.g., for a new open-source benchmark).
        \item The instructions should contain the exact command and environment needed to run to reproduce the results. See the NeurIPS code and data submission guidelines (\url{https://nips.cc/public/guides/CodeSubmissionPolicy}) for more details.
        \item The authors should provide instructions on data access and preparation, including how to access the raw data, preprocessed data, intermediate data, and generated data, etc.
        \item The authors should provide scripts to reproduce all experimental results for the new proposed method and baselines. If only a subset of experiments are reproducible, they should state which ones are omitted from the script and why.
        \item At submission time, to preserve anonymity, the authors should release anonymized versions (if applicable).
        \item Providing as much information as possible in supplemental material (appended to the paper) is recommended, but including URLs to data and code is permitted.
    \end{itemize}

\item {\bf Experimental setting/details}
    \item[] Question: Does the paper specify all the training and test details (e.g., data splits, hyperparameters, how they were chosen, type of optimizer, etc.) necessary to understand the results?
    \item[] Answer: \answerYes{} 
    \item[] Justification: Section \ref{sec:implementation_details} (Experiment Implementation Details) has introduced key parameters of SFT, RL, and testing. Besides, the full parameter setting is provided in Appendix B.
    \item[] Guidelines:
    \begin{itemize}
        \item The answer NA means that the paper does not include experiments.
        \item The experimental setting should be presented in the core of the paper to a level of detail that is necessary to appreciate the results and make sense of them.
        \item The full details can be provided either with the code, in appendix, or as supplemental material.
    \end{itemize}

\item {\bf Experiment statistical significance}
    \item[] Question: Does the paper report error bars suitably and correctly defined or other appropriate information about the statistical significance of the experiments?
    \item[] Answer: \answerNo{} 
    \item[] Justification: The paper reports performance metrics as percentages and raw counts in tables such as Table 1, 2. However, it does not include error bars, confidence intervals, or formal statistical significance tests for these results. Conducting multiple runs for each experimental condition to gather data for robust statistical significance analysis was deemed prohibitively expensive. This is primarily due to the high cost of RL training.
    \item[] Guidelines:
    \begin{itemize}
        \item The answer NA means that the paper does not include experiments.
        \item The authors should answer "Yes" if the results are accompanied by error bars, confidence intervals, or statistical significance tests, at least for the experiments that support the main claims of the paper.
        \item The factors of variability that the error bars are capturing should be clearly stated (for example, train/test split, initialization, random drawing of some parameter, or overall run with given experimental conditions).
        \item The method for calculating the error bars should be explained (closed form formula, call to a library function, bootstrap, etc.)
        \item The assumptions made should be given (e.g., Normally distributed errors).
        \item It should be clear whether the error bar is the standard deviation or the standard error of the mean.
        \item It is OK to report 1-sigma error bars, but one should state it. The authors should preferably report a 2-sigma error bar than state that they have a 96\% CI, if the hypothesis of Normality of errors is not verified.
        \item For asymmetric distributions, the authors should be careful not to show in tables or figures symmetric error bars that would yield results that are out of range (e.g. negative error rates).
        \item If error bars are reported in tables or plots, The authors should explain in the text how they were calculated and reference the corresponding figures or tables in the text.
    \end{itemize}

\item {\bf Experiments compute resources}
    \item[] Question: For each experiment, does the paper provide sufficient information on the computer resources (type of compute workers, memory, time of execution) needed to reproduce the experiments?
    \item[] Answer: \answerYes{} 
    \item[] Justification: This paper claims computer resources for SFT and RL in section \ref{sec:implementation_details}.
    \item[] Guidelines:
    \begin{itemize}
        \item The answer NA means that the paper does not include experiments.
        \item The paper should indicate the type of compute workers CPU or GPU, internal cluster, or cloud provider, including relevant memory and storage.
        \item The paper should provide the amount of compute required for each of the individual experimental runs as well as estimate the total compute. 
        \item The paper should disclose whether the full research project required more compute than the experiments reported in the paper (e.g., preliminary or failed experiments that didn't make it into the paper). 
    \end{itemize}
    
\item {\bf Code of ethics}
    \item[] Question: Does the research conducted in the paper conform, in every respect, with the NeurIPS Code of Ethics \url{https://neurips.cc/public/EthicsGuidelines}?
    \item[] Answer: \answerYes{} 
    \item[] Justification: The research focuses on LLM-based Verilog code generation, using publicly available code from GitHub to create a training dataset. It does not involve human subjects, sensitive personal data, or applications with immediate high-risk ethical concerns outlined in the NeurIPS Code of Ethics. The goal is to advance Verilog auto-generation.
    \item[] Guidelines:
    \begin{itemize}
        \item The answer NA means that the authors have not reviewed the NeurIPS Code of Ethics.
        \item If the authors answer No, they should explain the special circumstances that require a deviation from the Code of Ethics.
        \item The authors should make sure to preserve anonymity (e.g., if there is a special consideration due to laws or regulations in their jurisdiction).
    \end{itemize}

\item {\bf Broader impacts}
    \item[] Question: Does the paper discuss both potential positive societal impacts and negative societal impacts of the work performed?
    \item[] Answer: \answerYes{} 
    \item[] Justification: See Introduction and Appendix.
    \item[] Guidelines:
    \begin{itemize}
        \item The answer NA means that there is no societal impact of the work performed.
        \item If the authors answer NA or No, they should explain why their work has no societal impact or why the paper does not address societal impact.
        \item Examples of negative societal impacts include potential malicious or unintended uses (e.g., disinformation, generating fake profiles, surveillance), fairness considerations (e.g., deployment of technologies that could make decisions that unfairly impact specific groups), privacy considerations, and security considerations.
        \item The conference expects that many papers will be foundational research and not tied to particular applications, let alone deployments. However, if there is a direct path to any negative applications, the authors should point it out. For example, it is legitimate to point out that an improvement in the quality of generative models could be used to generate deepfakes for disinformation. On the other hand, it is not needed to point out that a generic algorithm for optimizing neural networks could enable people to train models that generate Deepfakes faster.
        \item The authors should consider possible harms that could arise when the technology is being used as intended and functioning correctly, harms that could arise when the technology is being used as intended but gives incorrect results, and harms following from (intentional or unintentional) misuse of the technology.
        \item If there are negative societal impacts, the authors could also discuss possible mitigation strategies (e.g., gated release of models, providing defenses in addition to attacks, mechanisms for monitoring misuse, mechanisms to monitor how a system learns from feedback over time, improving the efficiency and accessibility of ML).
    \end{itemize}
    
\item {\bf Safeguards}
    \item[] Question: Does the paper describe safeguards that have been put in place for responsible release of data or models that have a high risk for misuse (e.g., pretrained language models, image generators, or scraped datasets)?
    \item[] Answer: \answerNA{} 
    \item[] Justification: The training data used in this work comes from open-source code on GitHub and the open-source language model (e.g., DeepSeek-R1), with no risk of sensitive content leakage.
    \item[] Guidelines:
    \begin{itemize}
        \item The answer NA means that the paper poses no such risks.
        \item Released models that have a high risk for misuse or dual-use should be released with necessary safeguards to allow for controlled use of the model, for example by requiring that users adhere to usage guidelines or restrictions to access the model or implementing safety filters. 
        \item Datasets that have been scraped from the Internet could pose safety risks. The authors should describe how they avoided releasing unsafe images.
        \item We recognize that providing effective safeguards is challenging, and many papers do not require this, but we encourage authors to take this into account and make a best faith effort.
    \end{itemize}

\item {\bf Licenses for existing assets}
    \item[] Question: Are the creators or original owners of assets (e.g., code, data, models), used in the paper, properly credited and are the license and terms of use explicitly mentioned and properly respected?
    \item[] Answer: \answerYes{} 
    \item[] Justification: The paper properly credits the creators of the datasets used: VerilogEval v1 \cite{liu2023verilogeval} / v2 \cite{verilogeval_v2} and RTLLM v1.1 \cite{rtllm} / v2 \cite{rtllm_v2} are cited in Section~\ref{sec:experiments}. Also the creators of the models used: \RR \cite{deepseekr1}, \DSIII \cite{deepseekv3} and Qwen \cite{qwen2.5coder} are cited in Sections \ref{sec:introduction} and \ref{sec:method}. According to public sources, all datasets and models are released under open-source licenses, and their use in this paper complies with the respective licensing terms.
    \item[] Guidelines:
    \begin{itemize}
        \item The answer NA means that the paper does not use existing assets.
        \item The authors should cite the original paper that produced the code package or dataset.
        \item The authors should state which version of the asset is used and, if possible, include a URL.
        \item The name of the license (e.g., CC-BY 4.0) should be included for each asset.
        \item For scraped data from a particular source (e.g., website), the copyright and terms of service of that source should be provided.
        \item If assets are released, the license, copyright information, and terms of use in the package should be provided. For popular datasets, \url{paperswithcode.com/datasets} has curated licenses for some datasets. Their licensing guide can help determine the license of a dataset.
        \item For existing datasets that are re-packaged, both the original license and the license of the derived asset (if it has changed) should be provided.
        \item If this information is not available online, the authors are encouraged to reach out to the asset's creators.
    \end{itemize}

\item {\bf New assets}
    \item[] Question: Are new assets introduced in the paper well documented and is the documentation provided alongside the assets?
    \item[] Answer: \answerYes{} 
    \item[] Justification: The paper introduces the \xname~training framework, and the model trained by it, \modelname~in Section \ref{sec:method}. The paper itself serves as the primary documentation for these new assets. If these assets were to be released, further practical documentation (e.g., README files) would typically accompany them.
    \item[] Guidelines:
    \begin{itemize}
        \item The answer NA means that the paper does not release new assets.
        \item Researchers should communicate the details of the dataset/code/model as part of their submissions via structured templates. This includes details about training, license, limitations, etc. 
        \item The paper should discuss whether and how consent was obtained from people whose asset is used.
        \item At submission time, remember to anonymize your assets (if applicable). You can either create an anonymized URL or include an anonymized zip file.
    \end{itemize}

\item {\bf Crowdsourcing and research with human subjects}
    \item[] Question: For crowdsourcing experiments and research with human subjects, does the paper include the full text of instructions given to participants and screenshots, if applicable, as well as details about compensation (if any)? 
    \item[] Answer: \answerNA{} 
    \item[] Justification: The research described in the paper does not involve crowdsourcing experiments or research with human subjects.
    \item[] Guidelines:
    \begin{itemize}
        \item The answer NA means that the paper does not involve crowdsourcing nor research with human subjects.
        \item Including this information in the supplemental material is fine, but if the main contribution of the paper involves human subjects, then as much detail as possible should be included in the main paper. 
        \item According to the NeurIPS Code of Ethics, workers involved in data collection, curation, or other labor should be paid at least the minimum wage in the country of the data collector. 
    \end{itemize}

\item {\bf Institutional review board (IRB) approvals or equivalent for research with human subjects}
    \item[] Question: Does the paper describe potential risks incurred by study participants, whether such risks were disclosed to the subjects, and whether Institutional Review Board (IRB) approvals (or an equivalent approval/review based on the requirements of your country or institution) were obtained?
    \item[] Answer: \answerNA{} 
    \item[] Justification: The research does not involve human subjects and therefore did not require IRB approval or an equivalent review.
    \item[] Guidelines:
    \begin{itemize}
        \item The answer NA means that the paper does not involve crowdsourcing nor research with human subjects.
        \item Depending on the country in which research is conducted, IRB approval (or equivalent) may be required for any human subjects research. If you obtained IRB approval, you should clearly state this in the paper. 
        \item We recognize that the procedures for this may vary significantly between institutions and locations, and we expect authors to adhere to the NeurIPS Code of Ethics and the guidelines for their institution. 
        \item For initial submissions, do not include any information that would break anonymity (if applicable), such as the institution conducting the review.
    \end{itemize}

\item {\bf Declaration of LLM usage}
    \item[] Question: Does the paper describe the usage of LLMs if it is an important, original, or non-standard component of the core methods in this research? Note that if the LLM is used only for writing, editing, or formatting purposes and does not impact the core methodology, scientific rigorousness, or originality of the research, declaration is not required.
    \item[] Answer: \answerYes{} 
    \item[] Justification: The use of Large Language Models (LLMs) such as DeepSeek-R1, DeepSeek-V3, and Qwen2.5-Coder-7B is a fundamental and core component of this research. The entire paper is about ``reasoning LLMs for RTL generation'' defined as using LLMs to generate RTL code, and proposes an ``training framework'' specifically to train a better LLM in this task. This is detailed throughout the paper, particularly in Sections \ref{sec:introduction}~(Introduction), \ref{sec:method}~(Methods), \ref{sec:experiments}~(Experiments).
    \item[] Guidelines:
    \begin{itemize}
        \item The answer NA means that the core method development in this research does not involve LLMs as any important, original, or non-standard components.
        \item Please refer to our LLM policy (\url{https://neurips.cc/Conferences/2025/LLM}) for what should or should not be described.
    \end{itemize}

\end{enumerate}

\newpage

\appendix

\section{Method Details}

\subsection{Proof and Further Explanation of Theorem 2.1}
\label{sec:proof}

\begin{proof}
Observe that the sequence $Y \to X \to Y'$ forms a Markov chain.  By the Data Processing Inequality (DPI),
\[
I(Y;Y') \;\le\; I(Y;X).
\]
Under the assumption that $E_{Y,Y'}$ holds almost surely, we have $H(Y\mid Y') = 0$, and thus
\[
I(Y;Y') \;=\; H(Y) - H(Y\mid Y') \;=\; H(Y).
\]
It follows that 
\[
H(Y) \;=\; I(Y;Y') \;\leq\; I(Y;X) \;\leq\; H(Y) 
\;\Longrightarrow\;
I(Y;X) \;=\; H(Y) 
\;\Longrightarrow\;
H(Y\mid X) \;=\; 0,
\]
meaning $Y$ is determined by $X$ almost surely and hence $E_{Y,X}$ holds.  

Next, since $H(Y\mid X)=0$ implies $H(X)=H(Y)$ and $I(X;Y')\le H(X)$, a failure of $E_{X,Y'}$ under the NLCDE assumption would force
\[
I(X;Y') \;<\; H(X) \;=\; H(Y),
\]
contradicting $I(Y;Y')=H(Y)$.  Therefore, $E_{X,Y'}$ must also hold almost surely.  Combining these two results gives
\[
E_{Y,X}\;\land\;E_{X,Y'}\quad\text{holds almost surely.}
\]
\end{proof}

\textbf{Remark}: The need for NLCDE in Theorem~2.1 arises because $E_{X,Y} \Rightarrow H(Y\mid X)=0$, but $H(Y\mid X)=0 \not\Rightarrow E_{X,Y}$.  
A counterexample is when $X$ and $Y$ are incorrectly matched with probability one.  
To be more specific (though not fully rigorous, just to aid understanding), if the NL-to-code model wrongly transforms $A$ (e.g., ``design a multiplier'') in the NL domain to $B$ (e.g., ``design an adder'') in the code domain, and transforms $B$ in the NL domain to $A$ in the code domain, while the code-to-NL model maps $A$ in the code domain to $B$ in the NL domain and $B$ in the code domain to $A$ in the NL domain, then $H(Y\mid Y')=0$ can hold without $E_{Y,Y'}$.  
Thus, the NLCDE assumption is necessary to resolve this.

\textbf{Further explanation}: Here we re-emphasize some critical points of this theorem:
\begin{enumerate}
    \item \textbf{Functional Identity}: The theory is built upon the space of code functions ($\mathcal{F}$) and natural language descriptions ($\mathcal{L}$). Different code snippets / NL descriptions that implement / describe the same function (e.g., the same RTL module) are \textbf{considered identical} within $\mathcal{F} ~/~ \mathcal{L}$.
    \item \textbf{Interpretation of Determinism}: The assumption of deterministic mappings ($M_1: \mathcal{F} \rightarrow \mathcal{L}$ and $M_2: \mathcal{L} \rightarrow \mathcal{F}$) models the high functional consistency (not textual uniformity) achieved by capable LLMs. As model capability increases, the mapping from a precise NL description to a core code function becomes increasingly stable (probability converges to $1$).
    \item \textbf{Theoretical Relevance}: Theorem 2.1 establishes that, under these idealized conditions, the functional equivalence (of $Y$ and $Y'$) after the round-trip process ($Y \rightarrow X \rightarrow Y'$) can guarantee the correctness of both problem summarization ($Y \rightarrow X$) and code generation ($X \rightarrow Y'$). This provides a foundational principle explaining why our synthesis loop can bootstrap high-quality, self-consistent data, a principle that is strongly supported by our empirical outcomes.
\end{enumerate}

\subsection{Algorithm Description of Adaptive DAPO}
\label{sec:adaptive_dapo}

In this section, we provide the algorithm description of adaptive DAPO in Algorithm~\ref{alg:adaptive_sampling_final}.
In this algorithm, one epoch means going through the whole training dataset, while one step is to collect enough samples and update the model parameters like standard DAPO~\cite{yu2025dapo}.
Note that we achieve the dynamic batch size by two granularities:
First, we use a step-level ratio $r_{valid}$ to control the generation batch size $b_{gen}$. 
Second, if one generation does not provide enough samples for training, we use another inner-step-level ratio $r_{step}$ to control the generation batch size for the remaining samples.

\begin{algorithm}[H]
\caption{Adaptive DAPO}
\label{alg:adaptive_sampling_final}
\begin{algorithmic}
\REQUIRE Training batch size $b_{train}$, dataset $\mathcal{D}$
\ENSURE Updated $r_{valid}$ and filtered problem pool
\STATE Initialize $r_{valid} \gets 1$

\FOR{epoch $= 1, 2, \ldots$}
    \STATE Shuffle $\mathcal{D}$ \hfill \textit{\small(Epoch reset)}
    \STATE $N_{total} \gets |\mathcal{D}|$, $N_{consumed} \gets 0$
    
    \WHILE{$N_{consumed} < N_{total}$} \hfill \textit{\small(Process epoch)}
        \STATE $\Sigma b_{gen} \gets 0$, $n_{valid} \gets 0$, $r_{step} \gets r_{valid}$
        \WHILE{$n_{valid} < b_{train}$}
            \STATE $b_{remain} \gets b_{train} - n_{valid}$
            \STATE $b_{ge} \gets \lceil b_{remain} / r_{step} \rceil$ \hfill \textit{\small(Dynamic batch)}
            \STATE $\mathcal{D}' \gets \mathcal{D}[N_{consumed} : \min(N_{consumed}+b_{ge}, N_{total})]$
            \STATE Generate $b_{ge}$ samples from $\mathcal{D}'$
            \STATE Update counters: $n_{valid} \gets n_{valid} + v_{new}$, $\Sigma b_{gen} \gets \Sigma b_{gen} + b_{ge}$
            \STATE $r_{step} \gets \min\left(r_{step}, \frac{n_{valid}}{\Sigma b_{gen}} \right)$
        \ENDWHILE
        
        \STATE Update ratio: $r_{valid} \gets \min\left(r_{valid}, \frac{n_{valid}}{\Sigma b_{gen}} \right)$
        \STATE Train DAPO with $b_{train}$ valid samples \hfill \textit{\small(RL step)}
    \ENDWHILE
\ENDFOR
\end{algorithmic}
\end{algorithm}

\section{Parameter Setting}
\label{sec:parameter}

The full parameter setting during the SFT (distillation) stage is shown in Table~\ref{tab:sft_param}, while the full parameter setting during the RL stage is shown in Table~\ref{tab:rl_param}.
During testing, we use a max context length of 16384 and a temperature of 1.0. We set top\_p to 1.0 for VerilogEval and 0.95 for RTLLM.

For RL, the generation batch size in Table~\ref{tab:rl_param} corresponds to train\_batch\_size in verl~\cite{sheng2024hybridflow}, and the training batch size corresponds to ppo\_mini\_batch\_size in verl.
A generation batch size of 128 and training batch size of 64 (with a rollout number of 16) means first generating $128\times16$ samples for 128 problems and updating two times, each with $64\times16$ samples, during one RL step.
Meanwhile, the clip ratio(high), clip ratio(low), overlong penalty factor, and overlong response length in Table~\ref{tab:rl_param} are introduced by DAPO.
Here, the max train response length in Table~\ref{tab:rl_param} corresponds to $L_{max}$ in DAPO, and the overlong response length corresponds to $L_{cache}$.
The overlong penalty in DAPO $P_{length}(y)$ (where y is response length) is defined as:
\begin{align}
    P_{length}(y) = \begin{cases}
        0, &|y| \leq L_{max} - L_{cache} \\
        - \frac{|y| - (L_{max} - L_{cache})}{L_{cache}}, &L_{max} - L_{cache} < |y| \leq L_{max} \\
        -1, &L_{max} < |y|,
    \end{cases}
\end{align}
which is added to the \{0, 1\} reward.

\begin{table*}[h]
\caption{SFT Parameter Setting.}
\label{tab:sft_param}
\centering
\footnotesize
\scalebox{0.9}{
\begin{tabular}{lllll}
\hline
\textbf{Parameter Category} & \textbf{Parameter Name} & \textbf{Value} & \textbf{Parameter Name} & \textbf{Value} \\
\hline
\multirow{1}{*}{Training Mode} & Finetuning Type & Full Parameter & Deepspeed & Zero3 \\
\hline
\multirow{3}{*}{Optimization \& Scheduling} & Epochs & 6 & Learning Rate (LR) & $1 \times 10^{-5}$ \\
& Batch Size & 64 & Optimizer & AdamW \\
& LR Scheduler & Cosine Decay & LR Warmup Ratio & 0.03 \\
& Numerical Precision & BF16 & & \\
\hline
\multirow{1}{*}{Context \& Data Handling} & Max Context Length & 16384 & Packing & True \\
\hline
\end{tabular}
}
\end{table*}

\begin{table*}[h]
\caption{RL Parameter Setting.}
\label{tab:rl_param}
\centering
\footnotesize
\scalebox{0.8}{
\begin{tabular}{lllll}
\hline
\textbf{Parameter Category} & \textbf{Parameter Name} & \textbf{Value} & \textbf{Parameter Name} & \textbf{Value} \\
\hline
\multirow{2}{*}{Batch Size Related} & Generation Batch Size & 128 & Training Batch Size & 64 \\
& Dynamic Batch Size & True & & \\
\hline
\multirow{2}{*}{Rollout Configuration} & Rollout Number & 16 & Rollout Temperature & 1.0 \\
& Rollout Engine & VLLM & Rollout GPU Memory Utilization & 0.8 \\
\hline
\multirow{2}{*}{Optimization \& Regularization} & Learning Rate & $1 \times 10^{-6}$ & Weight Decay & 0.0 \\
& KL Coefficient & 0.0 & KL Loss Coefficient & 0.0 \\
\hline
\multirow{2}{*}{Clipping \& Penalty} & Clip Ratio (High) & 0.28 & Clip Ratio (Low) & 0.2 \\
& Overlong Penalty Factor & 1.0 & & \\
\hline
\multirow{2}{*}{Length Control} & Max Train Response Length (Full) & 16384 & Overlong Response Length & 1024 \\
& Max Generate Response Length & 32768 & & \\
\hline
\multirow{3}{*}{\parbox{3.5cm}{Computation \& \\ Memory Optimization}} & Gradient Clip & 0.5 & Gradient Checkpointing & True \\
& Use Liger Kernel & True & VLLM Enforce Eager & False \\
& Tensor Parallel Size & 4 & & \\
\hline
\multirow{1}{*}{Distributed Training Configuration} & Number of Nodes & 2 & GPUs per Node & 8 \\
\hline
\multirow{1}{*}{Data Processing} & Remove Padding & True & Token Level Loss & True \\
\hline
\multirow{1}{*}{FSDP Related} & FSDP Optimizer Offload & False & FSDP Parameter Offload & False \\
\hline
\end{tabular}
}
\end{table*}

\section{Additional Statistics and Analysis}

\subsection{Benchmark Comparison}

Since there is a notable performance gain difference (especially for the RL phase) of our method between VerilogEval and RTLLM, we provide a deeper analysis of this phenomenon in this section. Given that RTLLM's performance gains stem mainly from reinforcement learning, we focus on distribution differences between our RL dataset and the two benchmarks.

\textbf{Distribution similarity to RTLLM}: We run the instructor-embedding model~\citep{su2022one} for the golden code in our RL dataset, RTLLM (v2), and VerilogEval (v2 spec-to-RTL), then generate a t-SNE distribution plot~\citep{maaten2008visualizing} in Figure~\ref{fig:tsne}. This plot revealed that our RL dataset aligns closely with RTLLM but diverges from VerilogEval for both problems and solutions. 

\begin{figure}[h]
  \centering
  \begin{subfigure}{0.49\linewidth}
    \centering
    \includegraphics[width=\linewidth]{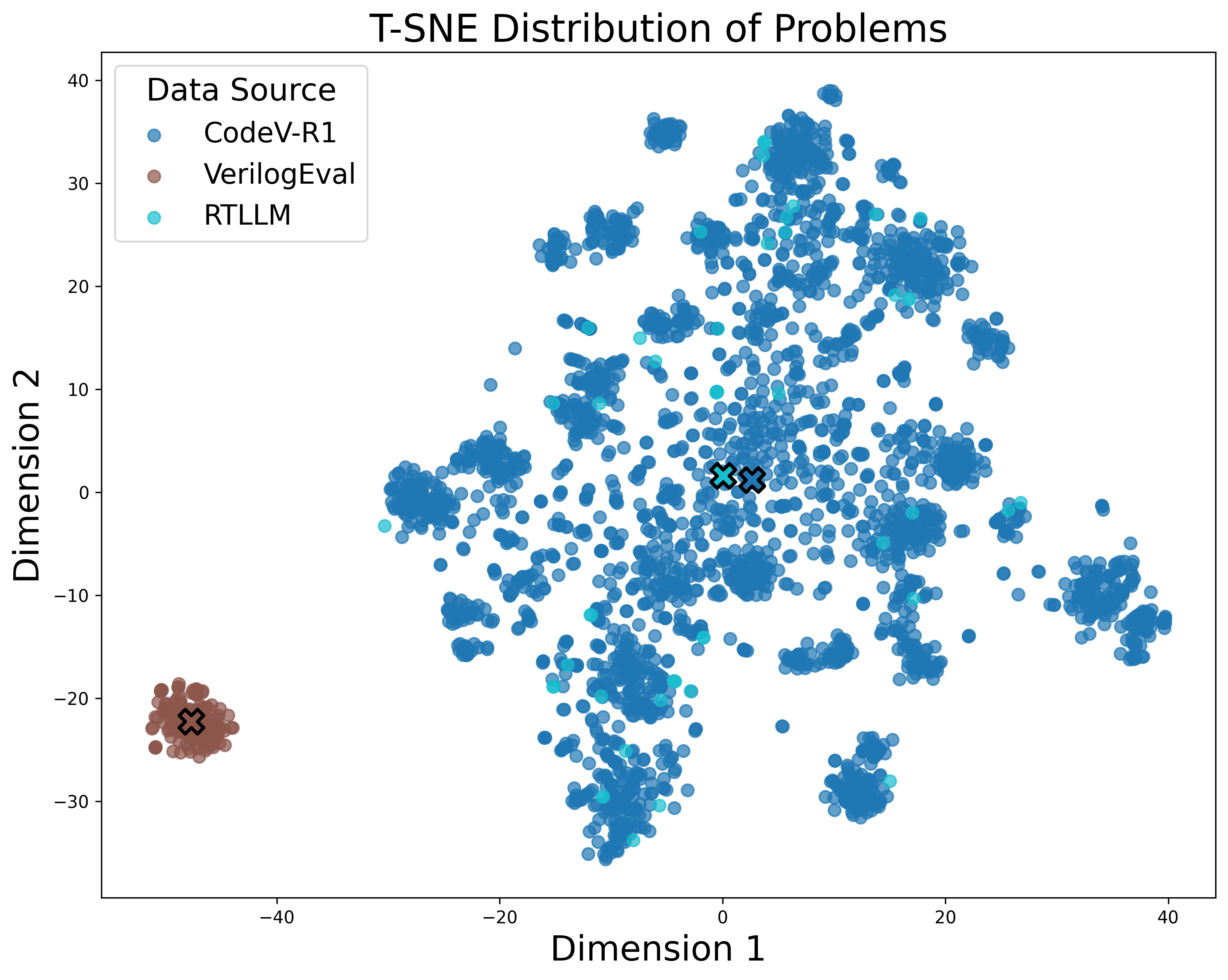}
    \caption{}
  \end{subfigure}
  \begin{subfigure}{0.49\linewidth}
    \centering
    \includegraphics[width=\linewidth]{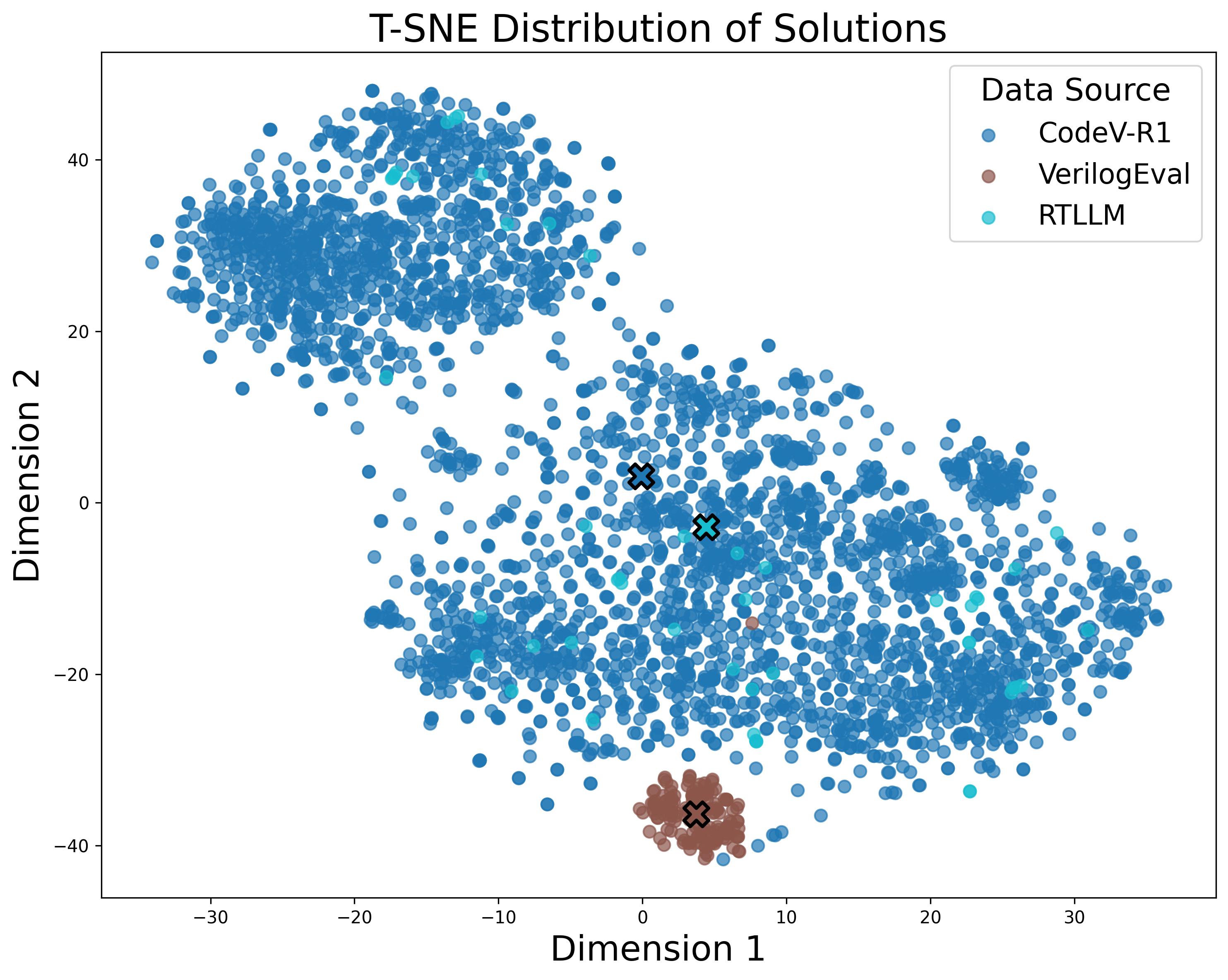}
    \caption{}
  \end{subfigure}
  \caption{\textbf{T-SNE distribution} of \xname~RL dataset, RTLLM (v2), and VerilogEval (v2 spec-to-RTL). \textbf{Left}: Problem (NL) distribution; \textbf{Right}: Solution (code) distribution.}
  \label{fig:tsne}
\end{figure}

\begin{table}[h]
\centering
\caption{Centroid Distance and Similarity between Our RL Dataset and Benchmarks}
\label{tab:tsne_centroid}
\begin{tabular}{lcc}
\hline
& \textbf{RTLLM v2} & \textbf{VerilogEval v2} \\
\hline
Euclidean Distance & 0.1150 & 0.2903 \\
Cosine Similarity & 0.9926 & 0.9547 \\
\hline
\end{tabular}
\end{table}

We also show cosine similarity and Euclidean distance metrics for the embedding centroids between our RL dataset and the benchmarks in Table~\ref{tab:tsne_centroid}.
Our RL dataset’s embedding centroid shows significantly smaller Euclidean distance and closer-to-1 (maximum value) cosine similarity with RTLLM than with VerilogEval. Importantly, this reflects only embedding centroid relationships, not data homogenization or overfitting to RTLLM.

\textbf{Problem type difference}: Additionally, we conducted a detailed case study across both benchmarks and identified that one reason for this performance gain difference is VerilogEval’s heavier use of table/graph-based problems, where our model underperforms significantly. Table~\ref{tab:accuracy_comparison} presents a comparison of our model's accuracy against DeepSeek-R1 on these problem types, including their ratios within benchmarks:

\begin{table}[h]
\centering
\caption{Accuracy Comparison on Table/Graph Problems Across Benchmarks}
\label{tab:accuracy_comparison}
\begin{tabular}{lccc}
\hline
& \textbf{Tables/Graphs (Ours)} & \textbf{Tables/Graphs (DS-R1)} \\
\hline
VerilogEval v2 (Ratio) & 29.49\% & 29.49\% \\
VerilogEval v2 (Accuracy) & 55.43\% & 78.04\% \\
RTLLM v2 (Ratio) & 0\% & 0\% \\
RTLLM v2 (Accuracy) & N/A & N/A \\
\hline
\end{tabular}
\end{table}

This points to a key improvement direction: incorporating more table/graph-specific instruction-response pairs (e.g., KMap, FSM, waveform data as in CraftRTL~\citep{liu2024craftrtl}) into our training dataset.

\textbf{Benchmark complexity comparison}: Additionally, we provide the token count comparison, serving as a proxy for complexity, for these two benchmarks.
Table~\ref{tab:code_length} provides the average number of lines and tokens (comments and blank lines removed) for VerilogEval and RTLLM. From these results, we can clearly see that RTLLM is generally more complex than VerilogEval.

\begin{table}[h]
\centering
\caption{Code Length Comparison between VerilogEval and RTLLM Datasets}
\label{tab:code_length}
\begin{tabular}{lcc}
\hline
Dataset & Average Lines & Average Tokens \\
\hline
VerilogEval v1 Machine & 13.89 & 93.85 \\
VerilogEval v1 Human & 15.82 & 117.35 \\
VerilogEval v2 Code Completion & 16.10 & 121.84 \\
VerilogEval v2 Spec to RTL & 16.12 & 122.31 \\
RTLLM v1.1 & 56.52 & 470.48 \\
RTLLM v2.0 & 46.30 & 403.80 \\
\hline
\end{tabular}
\end{table}

\subsection{Additional Benchmark Statistics}

In this section, we take a close look at the mistake type on VerilogEval v2 and the pass@k metrics of different task types on RTLLM v2.

\begin{table*}[h]
\caption{Comparison of Error Types for VerilogEval v2.}
\label{tab:benchmark_error_type}
\centering
\footnotesize
\scalebox{0.9}{
\begin{tabular}{lcccccccccccccc}
\toprule
\multirow{2}{*}{Model} & \multicolumn{9}{c}{Compiler Errors} & \multicolumn{4}{c}{Runtime Errors} \\
\cmidrule(lr){2-10} \cmidrule(lr){11-14}
 & C & S & w & m & p & e & n & c & Total & R & T & r & Total \\
\midrule
\modelDistill & 69 & 107 & 108 & 27 & 20 & 1 & 0 & 0 & 332 & 1699 & 107 & 19 & 1825 \\
\modelFinal & 63 & 38 & 46 & 22 & 3 & 0 & 1 & 0 & 173 & 1610 & 114 & 1 & 1725 \\
DeepSeek-R1-671B & 37 & 59 & 47 & 0 & 0 & 1 & 0 & 0 & 144 & 1096 & 110 & 5 & 1211 \\
\bottomrule
\end{tabular}
}

\begin{tablenotes}
\item $^{\rm *}$ Error type explanation: 
C -- General Compiler Error; 
S -- Syntax Error; 
w -- Reg Declared as Wire; 
m -- Module Missing; 
p -- Unable to Bind Wire/Reg; 
e -- Explicit Cast Required; 
n -- Sensitivity Problem; 
c -- Unable to Bind Wire/Reg `clk`; 
R -- General Runtime Error; 
T -- Timeout.
r -- Reset Issue; 
\end{tablenotes}

\end{table*}

\textbf{Mistake type:} As shown in Table~\ref{tab:benchmark_error_type}, our RL training notably reduces error rates, particularly for compiler errors. \modelFinal achieves a 48\% reduction in total compiler errors compared to \modelDistill (from 332 to 173), with the most pronounced improvements in syntax errors (S, reduced by 65\% from 107 to 38) and wire declaration issues (w, down 57\% from 108 to 46). Notably, our \modelFinal has a remarkably fewer syntax error (38) compared to DeepSeek-R1 (59) and fewer reset issues (r) (1 vs 5). 
Even so, our \modelFinal still has limitations. For instance, the number of general runtime errors (R) is still notably higher than DeepSeek-R1. This might stem from the RL training data not being suitable for VerilogEval (unlike the great improvement on RTLLM).

\begin{table*}[h]
\caption{Performance Across Different Module Categories on RTLLM v2.}
\label{tab:module_category_pass_k}
\centering
\footnotesize
\scalebox{0.85}{
\begin{tabular}{lcccccccc}
\toprule
\multirow{2}{*}{Model} & \multicolumn{2}{c}{Arithmetic (\%)} & \multicolumn{2}{c}{Control (\%)} & \multicolumn{2}{c}{Memory (\%)} & \multicolumn{2}{c}{Miscellaneous (\%)} \\
\cmidrule(lr){2-3} \cmidrule(lr){4-5} \cmidrule(lr){6-7} \cmidrule(lr){8-9}
 & pass@1 & pass@5 & pass@1 & pass@5 & pass@1 & pass@5 & pass@1 & pass@5 \\
\midrule
\modelDistill & 69.47 & 89.19 & 74.17 & 83.06 & 43.57 & 56.69 & 46.94 & 59.72 \\
\modelFinal & 83.68 & 91.66 & 80.00 & 83.33 & 51.43 & 63.30 & 57.78 & 72.40 \\
DeepSeek-R1-671B & 76.58 & 90.65 & 83.33 & 83.33 & 57.14 & 60.71 & 52.50 & 67.72 \\
\bottomrule
\end{tabular}
}
\end{table*}

\textbf{Accuracy among task types:} Table~\ref{tab:module_category_pass_k} demonstrates the comparative performance across module categories, where \modelFinal shows consistent improvements over \modelDistill while maintaining competitive results against the larger DeepSeek-R1. 
Notably, \modelFinal achieves superior pass@1 rates in all categories over \modelDistill, with particularly strong gains in arithmetic modules (83.68\% vs 69.47\%) and miscellaneous modules (57.78\% vs 46.94\%). It also surpasses DeepSeek-R1 in these two categories. 
Compared with the training dataset classification provided in Figure~\ref{fig:train_data_category}, these two categories occupy a larger portion (arithmetic and others).
This observation suggests that augmenting the training set with high-quality RL data for currently underperforming categories (particularly Memory and Control modules) could be a productive direction for future model improvement.

\subsection{Training Dataset Statistics}

\begin{figure}[h]
  \centering
  \begin{subfigure}{0.49\linewidth}
    \centering
    \includegraphics[width=\linewidth]{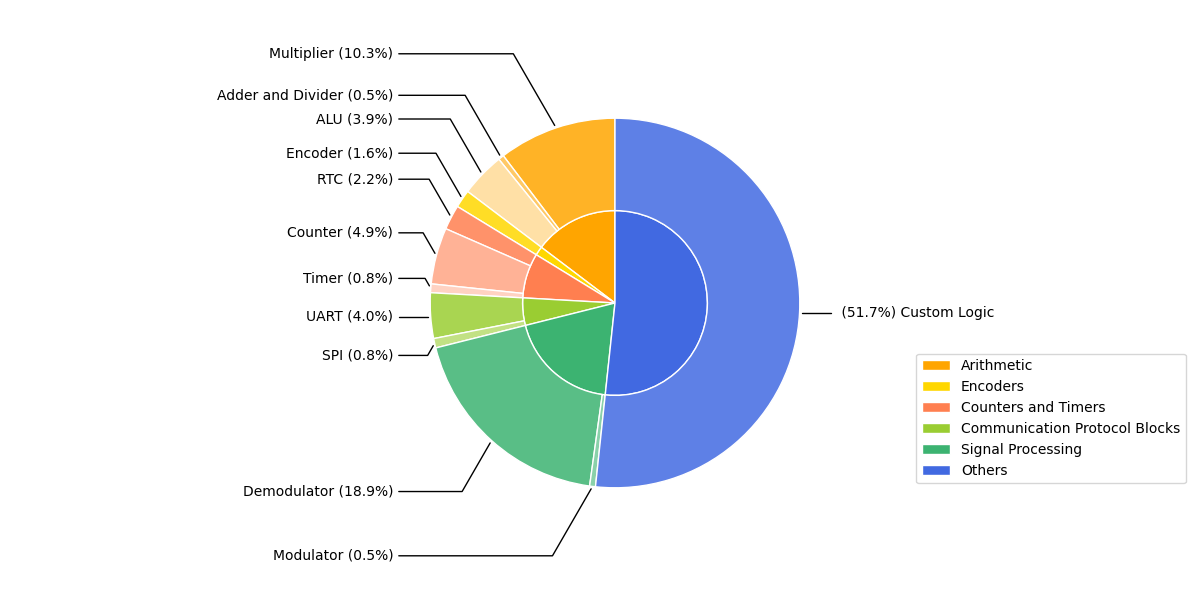}
    \caption{}
  \end{subfigure}
  \begin{subfigure}{0.49\linewidth}
    \centering
    \includegraphics[width=\linewidth]{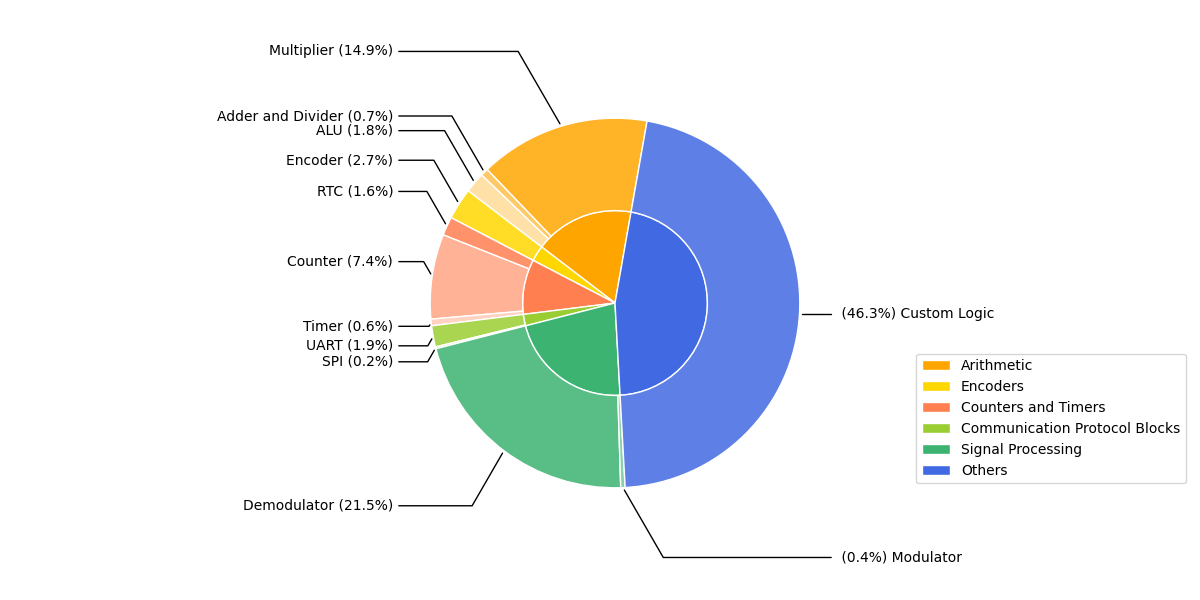}
    \caption{}
  \end{subfigure}
  \caption{\textbf{Problem category distribution.} \textbf{Left}: SFT dataset; \textbf{Right}: RL dataset.}
  \label{fig:train_data_category}
\end{figure}

Figure~\ref{fig:train_data_category} presents the category distribution of our \sizeDistill SFT and \sizeRL RL training datasets (categorized using both questions and answers). While both datasets show comparable distributions, the RL dataset has fewer unclassified problems.

\begin{figure}[h]
    \centering
    \includegraphics[width=\linewidth]{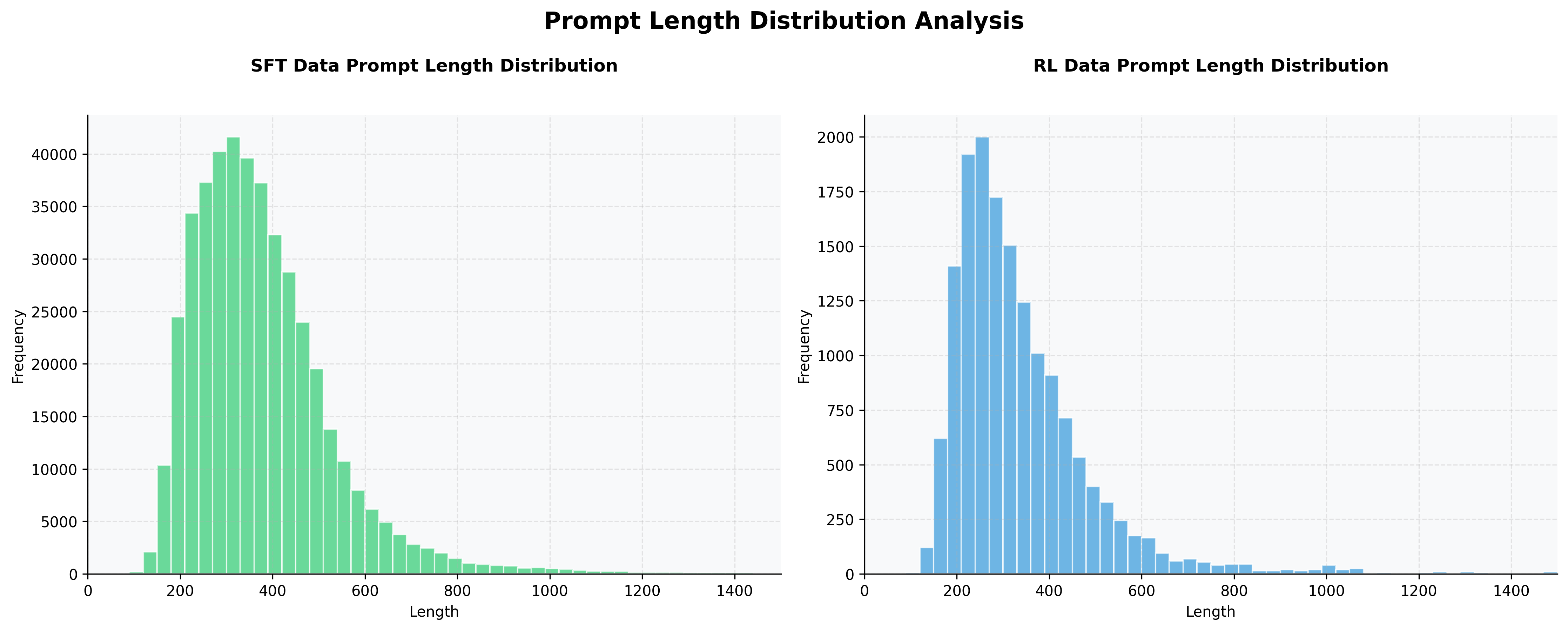}
    \caption{\textbf{Prompt length distribution.} \textbf{Left}: SFT dataset; \textbf{Right}: RL dataset.}
    \label{fig:prompt_dist}
\end{figure}

Figure~\ref{fig:prompt_dist} illustrates the prompt length distribution (in tokens) for our \sizeDistill SFT and \sizeRL RL training datasets, both clipped to a maximum prompt length of 1500 tokens. The figure reveals a sharper distribution for the RL data, indicating shorter and lower-variance prompt lengths compared to the SFT data.
To quantify this observation, we calculated the following statistics:
The average length of SFT data is 377.81 with a standard deviation of 161.30, while the average length of RL data is 336.67 with a standard deviation of 153.88.
These statistics align with the visual trends in the figure.

\begin{figure}[h]
    \centering
    \includegraphics[width=\linewidth]{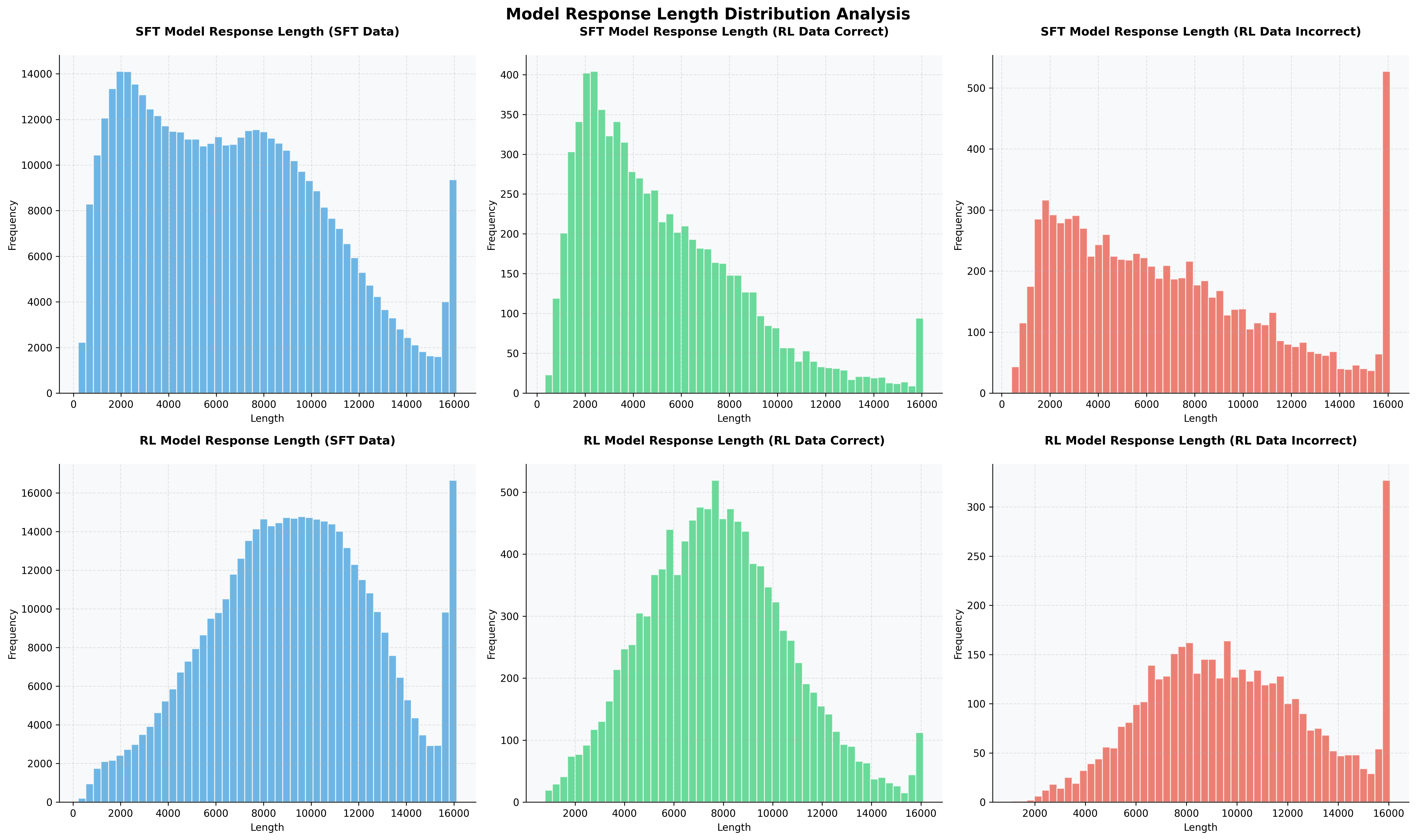}
    \caption{\textbf{Response length distribution.} \textbf{Left}: SFT dataset; \textbf{Middle}: Correct samples in RL dataset; \textbf{Right}: Incorrect samples in RL dataset.}
    \label{fig:response_dist}
\end{figure}

Figure~\ref{fig:response_dist} depicts the response length distributions (in tokens) for \modelDistill and \modelFinal. Note that the maximum context length—the sum of prompt length and response length—is capped at 16384 tokens. Consequently, when responses are truncated, their recorded length is 16384 tokens minus the prompt length, resulting in a somewhat scattered distribution (manifested as the two rightmost bars, instead of one, become longer in the distribution plot).
The response length for \modelFinal exhibits an evident right shift, indicating longer responses after reinforcement learning. Additionally, \modelFinal's response distribution is more symmetric compared to the left-skewed distribution of \modelDistill. The underlying cause of this discrepancy warrants further investigation.
We observe that incorrect samples are significantly longer, with a substantial proportion exceeding the length threshold. Even excluding these overlong samples, incorrect responses remain longer, characterized by a higher peak value (\modelFinal) or a slower post-peak decline (\modelDistill).
An intriguing phenomenon is that \modelFinal has a lower overlong ratio on the RL dataset but a higher ratio on the SFT dataset. This may arise from overfitting the overlong penalty during RL, while \modelFinal's tendency to generate longer responses increases overlong instances on the SFT dataset.

\subsection{Agent Ability Analysis}

This section presents supplementary experiments on agentic integration capabilities for our \modelFinal with the MAGE~\citep{zhao2025mage} framework. We identify two fundamental distinctions between MAGE and our approach: (1) MAGE utilizes golden testbenches for verification while \xname~operates without them, and (2) MAGE's performance exhibits strong dependency on both prompt engineering strategies and the underlying model capabilities.

Hence, we employ a targeted experimental design to isolate and evaluate \xname's efficacy within a multi-agent system. Specifically, we replaced only the RTL generation agent in MAGE with \xname—as this role best aligns with \xname's core capabilities—while maintaining the other agents based on Claude-3.5-Sonnet.

The experimental results in Table~\ref{tab:agent_comparison} demonstrate that \xname~achieves competitive performance (94.87\% Pass@1) compared to the best-performing model (Claude-3.5-Sonnet at 95.51\%), while substantially outperforming other baseline models. This indicates \xname's potential for integration into agent workflows and suggests promising directions for developing end-to-end agent capabilities in future work.

\begin{table}[h]
\centering
\caption{Agent Performance Comparison in MAGE Framework}
\label{tab:agent_comparison}
\begin{tabular}{lc}
\hline
\textbf{Model} & \textbf{Pass@1} \\
\hline
Deepseek-R1-0528 & 80.77\% \\
\DSIII & 88.46\% \\
GPT-4o-240806 & 76.28\% \\
Claude-3.5-Sonnet & 95.51\% \\
\xname (with Claude) & 94.87\% \\
\hline
\end{tabular}
\end{table}

\section{Case Study}
\label{sec:case_study}

\textbf{Comparison Between \modelFinal and \RR}:
In this section, we first present a case study on a specific Verilog problem in VerilogEval-v2 to illustrate the advantage of \modelFinal over \RR in this problem. The problem, reasoning chains, and results are shown in the blocks below.


Both Deepseek-R1 and our model comprehended the problem and engaged in reasoning and reflection, but only our model accurately implemented the logical function defined by the Karnaugh map, while R1's answer executed an incorrect logical function. The key distinction lies in our model's use of the declaration \textit{input [4:1] x}, which directly corresponds to the notation of the Karnaugh map in the problem description. In contrast, R1 employed \textit{input [3:0] x}, leading to confusion in subsequent reasoning. Although our model was initially challenged by this unconventional declaration, it ultimately chose the correct declaration through reflection, avoiding potential index mapping ambiguities. This resulted in significantly clearer code that is less prone to errors.

\begin{tcolorbox}[width=1.0\linewidth, halign=left, colframe=black, colback=white, boxsep=0.01mm, arc=1.5mm, left=2mm, right=2mm, boxrule=1pt, title={Problem Description}]
\footnotesize{
\vspace{5pt}
\textbf{Question:} \\
\vspace{5pt}
Question:
I would like you to implement a module named TopModule with the following
interface. All input and output ports are one bit unless otherwise
specified.

 - input  x (4 bits)
 - output f

The module should implement the function f shown in the Karnaugh map
below. d is don't-care, which means you may choose to output whatever
value is convenient.

~~~~~~~~~~~~x[1]x[2]

  x[3]x[4]   00  01  11  10
  
        ~~00~~~ | d | 0 | d | d |
        
        ~~01~~~ | 0 | d | 1 | 0 |
        
        ~~11~~~ | 1 | 1 | d | d |
        
        ~~10~~~ | 1 | 1 | 0 | d |

Answer:
}
\end{tcolorbox}

\label{subsec:deepseek_r1}
\begin{tcolorbox}[width=1.0\linewidth, halign=left, colframe=black, colback=white, boxsep=0.01mm, arc=1.5mm, left=2mm, right=2mm, boxrule=1pt, fontupper=\fontsize{7pt}{7pt}\selectfont, title={Reasoning Chain (\RR)}]
\footnotesize{
\vspace{5pt}
Okay, so I need to write a Verilog module called TopModule that takes a 4-bit input x and outputs a single bit f based on the given Karnaugh map. Let me start by understanding the problem step by step.

...

Wait, the K-map's rows are x3x4, columns x1x2. Let's see. For example, the first row is x3x4 = 00. So the row is determined by the bits x3 and x4. Similarly, columns are determined by x1 and x2. So each cell in the K-map is determined by the combination of x3, x4, x1, x2. So the four input bits are x1, x2, x3, x4. But how are these mapped to the input x in Verilog? The input x is a 4-bit vector. \begin{color}{red}Let's assume that x is declared as [3:0] x, where x[3] is the first bit (MSB) and x[0] is the last. So perhaps the bits are arranged as x[3] = x1, x[2] = x2, x[1] = x3, x[0] = x4. But this is a guess. Alternatively, the problem might have a typo, and the rows are x[2]x[3], and columns x[0]x[1]. But given the problem statement, I need to proceed with the given labels.\end{color}

...

Now, the goal is to cover all the 1's and 0's with the largest possible groups, using don't-cares as either 1 or 0 to maximize grouping.

First, let's identify the 1's and 0's.

Looking for 1's:

In row 01 (x3=0, x2=1), column 11 (x1=1, x0=1) → 1.

In row 11 (x3=1, x2=1), columns 00 and 01 → 1's.

In row 10 (x3=1, x2=0), columns 00, 01 → 1's.

In row 10, column 11 → 0. Wait, no: row 10, column 11 (x1=1, x0=1) is 0. So that's a 0.

The 0's are:

Row 00, column 01 → 0.

Row 01, columns 00, 10 → 0.

Row 10, column 11 → 0.

The don't-cares can be used to simplify the expression. So for f, we can choose to treat the don't-cares as 1 or 0 to form the largest possible groups.







...

Therefore, the Verilog code for the module would be:

module TopModule (
    input [3:0] x,
    output f
);
\begin{lstlisting}[language=Verilog, basicstyle=\color{red}]
    assign f = (x[3] & ~x[1]) | (~x[3] & x[1] & x[0]);
\end{lstlisting}
endmodule

But let me double-check this.

Another way to check is to test some values.

Test case 1: x3=1, x2=1, x1=0, x0=0 → f should be 1.

In the K-map: row x3x2=11 (row 11), column x1x0=00 → 1. Our expression: (1 \& \~0) | (0 \& 0 \& 0) → 1 | 0 = 1. Correct.

Test case 2: x3=1, x2=0, x1=0, x0=1 → row 10, column 01 → 1. Expression: (1 \& 1) | (0 \& ... ) → 1. Correct.



...

Test case 6: x3=0, x2=0, x1=1, x0=1 → row 00, column 11 (don't-care). Our expression sets this to 1 via \~x3 x1 x0. So f=1 here. Which is acceptable as a don't-care.

Thus, the Verilog code should correctly implement the K-map.
}
\end{tcolorbox}
\begin{tcolorbox}[width=1.0\linewidth, halign=left, colframe=black, colback=white, boxsep=0.01mm, arc=1.5mm, left=2mm, right=2mm, boxrule=1pt, fontupper=\fontsize{7pt}{7pt}\selectfont, title={Results (\RR)}]
\footnotesize{
\vspace{5pt}

...

**Solution**  

\textasciigrave\textasciigrave\textasciigrave verilog
\begin{lstlisting}[language=verilog]
module TopModule (
    input [3:0] x,
    output f
);
\end{lstlisting}
\begin{lstlisting}[language=Verilog, basicstyle=\color{red}]
    assign f = (x[3] & ~x[1]) | (~x[3] & x[1] & x[0]);
\end{lstlisting}
\begin{lstlisting}[language=Verilog]
endmodule
\end{lstlisting}
\textasciigrave\textasciigrave\textasciigrave

...

}
\end{tcolorbox}


\begin{tcolorbox}[width=1.0\linewidth, halign=left, colframe=black, colback=white, boxsep=0.01mm, arc=1.5mm, left=2mm, right=2mm, boxrule=1pt, fontupper=\fontsize{8pt}{8pt}\selectfont, title={Reasoning Chain (\modelFinal)}]
\footnotesize{
\vspace{5pt}
<think>Okay, let's try to tackle this Verilog problem. The goal is to create a module called TopModule with a 4-bit input x and a single-bit output f. The function is determined by the Karnaugh map provided.

...

Let me try to find the logic.

Looking for the minterms where f=1, considering don't cares as 1s.

Wait the f=1 in the K-map.

Looking across the K-map, the entries that have 1 are:

In x3x4=11 row:

Columns x1x2 when:

x1x2=00 (f=1) and x1x2=01 (f=1). 

In x3x4=10 row:

x1x2=00 and 01 (so x3x4=10, x1x2=00 and 01 → which gives x1=0, x2=0 and x1=0, x2=1. So x3x4 can be 10 and x1x2=00 and 01.

Also:

Row x3x4=01 and x1x2=11: f=1.

Row x3x4=01's column x1x2=11 (third column) gives 1.

...

The variables are x[1], x[2], x[3], x[4]. Wait the input is x as four bits. Let's suppose the four bits are x[3:0], but the code might have x[3], x[2], x[1], x[0], but the naming here may vary.

Wait the K-map labels the x as x[1] to x[4]. So in Verilog, if input is declared as [4:1], which is a standard way.

But in Verilog, arrays are usually declared with [width-1 : 0], but the actual variable name is [high:low].

For example, input [4:1] x would have indices x[4], x[3], x[2], x[1], x[0].

But in the Karnaugh map, the rows are x3 and x4, columns are x1 and x2.

But perhaps in the Verilog code, the variables are named x[1],x[2],x[3],x[4]. Let's assume the input is a four-bit vector named x, with each bit labeled x1, x2, x3, x4, such that:

input [4:1] x; // x[1], x[2], x[3], x[4]

But in Verilog, you can declare it as:

input [1:4] x; // but the syntax is different.

Alternatively, input [4:1] x; where x[1] is the first bit, etc.

\begin{color}{blue}
But given the problem statement's variables as x[1] to x[4], the code can be written as:

module TopModule (input [4:1] x, output f);

Then, the variables are:

x[1], x[2], x[3], x[4].
\end{color}

...

So the final code would look like:
module TopModule (input [4:1] x, output f);
\begin{lstlisting}[language=Verilog, basicstyle=\color{blue}]
assign f = (x[3] & ~x[1]) | (~x[3] & x[4] & x[1] & x[2]);
\end{lstlisting}
endmodule
Wait wait, the variables in the second term are x3=0 (because ~x[3], x4=1 (x[4]), x1=1 (x[1]), x2=1 (x[2]).

Yes.

So that's correct.

...

</think>

}
\end{tcolorbox}

\begin{tcolorbox}[width=1.0\linewidth, halign=left, colframe=black, colback=white, boxsep=0.01mm, arc=1.5mm, left=2mm, right=2mm, boxrule=1pt, fontupper=\fontsize{8pt}{8pt}\selectfont, title={Results (\modelFinal)}]
\vspace{5pt}
<answer> \textasciigrave\textasciigrave\textasciigrave verilog
\begin{lstlisting}[language=Verilog]
module TopModule (input [4:1] x, output f);
\end{lstlisting}
\begin{lstlisting}[language=Verilog, basicstyle=\color{blue}]
    assign f = (x[3] & ~x[1]) | (~x[3] & x[4] & x[1] & x[2]);
\end{lstlisting}
\begin{lstlisting}[language=Verilog]
endmodule
\end{lstlisting}
\textasciigrave\textasciigrave\textasciigrave </answer>

\end{tcolorbox}


\textbf{Low-quality Data Example}: Below is an example of low-quality data in our dataset. The original code is just an empty module with some comments, while the summarized problem describes a module with an unsigned 32-bit divider. The problem and the code are inconsistent in this case.

\begin{tcolorbox}[width=1.0\linewidth, halign=left, colframe=black, colback=white, boxsep=0.01mm, arc=1.5mm, left=2mm, right=2mm, boxrule=1pt, fontupper=\fontsize{7pt}{7pt}\selectfont, title={Original Code from GitHub}]
\vspace{5pt}
\begin{lstlisting}[language=Verilog]
module div_unsigned (
        aclr,
        clock,
        denom,
        numer,
        quotient,
        remain);
        input     aclr;
        input     clock;
        input   [31:0]  denom;
        input   [31:0]  numer;
        output  [31:0]  quotient;
        output  [31:0]  remain;
endmodule
// Retrieval info: PRIVATE: new_diagram STRING "1"
// Retrieval info: LIBRARY: lpm lpm.lpm_components.all
// Retrieval info: CONSTANT: LPM_DREPRESENTATION STRING "UNSIGNED"
// Retrieval info: CONSTANT: LPM_HINT STRING "MAXIMIZE_SPEED=6,LPM_REMAINDERPOSITIVE=TRUE"
// Retrieval info: CONSTANT: LPM_NREPRESENTATION STRING "UNSIGNED"
// Retrieval info: CONSTANT: LPM_PIPELINE NUMERIC "11"
// Retrieval info: CONSTANT: LPM_TYPE STRING "LPM_DIVIDE"
// Retrieval info: CONSTANT: LPM_WIDTHD NUMERIC "32"
// Retrieval info: CONSTANT: LPM_WIDTHN NUMERIC "32"
// Retrieval info: USED_PORT: aclr 0 0 0 0 INPUT NODEFVAL "aclr"
// Retrieval info: USED_PORT: clock 0 0 0 0 INPUT NODEFVAL "clock"
// Retrieval info: USED_PORT: denom 0 0 32 0 INPUT NODEFVAL "denom[31..0]"
// Retrieval info: USED_PORT: numer 0 0 32 0 INPUT NODEFVAL "numer[31..0]"
// Retrieval info: USED_PORT: quotient 0 0 32 0 OUTPUT NODEFVAL "quotient[31..0]"
// Retrieval info: USED_PORT: remain 0 0 32 0 OUTPUT NODEFVAL "remain[31..0]"
// Retrieval info: CONNECT: @aclr 0 0 0 0 aclr 0 0 0 0
// Retrieval info: CONNECT: @clock 0 0 0 0 clock 0 0 0 0
// Retrieval info: CONNECT: @denom 0 0 32 0 denom 0 0 32 0
// Retrieval info: CONNECT: @numer 0 0 32 0 numer 0 0 32 0
// Retrieval info: CONNECT: quotient 0 0 32 0 @quotient 0 0 32 0
// Retrieval info: CONNECT: remain 0 0 32 0 @remain 0 0 32 0
// Retrieval info: GEN_FILE: TYPE_NORMAL div_unsigned.v TRUE
// Retrieval info: GEN_FILE: TYPE_NORMAL div_unsigned.inc TRUE
// Retrieval info: GEN_FILE: TYPE_NORMAL div_unsigned.cmp TRUE
// Retrieval info: GEN_FILE: TYPE_NORMAL div_unsigned.bsf TRUE
// Retrieval info: GEN_FILE: TYPE_NORMAL div_unsigned_inst.v TRUE
// Retrieval info: GEN_FILE: TYPE_NORMAL div_unsigned_bb.v TRUE
// Retrieval info: GEN_FILE: TYPE_NORMAL div_unsigned_syn.v TRUE
// Retrieval info: LIB_FILE: lpm
\end{lstlisting}
\end{tcolorbox}

\begin{tcolorbox}[width=1.0\linewidth, halign=left, colframe=black, colback=white, boxsep=0.01mm, arc=1.5mm, left=2mm, right=2mm, boxrule=1pt, fontupper=\fontsize{8pt}{8pt}\selectfont, title={Summarized Problem for The Code}]
\vspace{5pt}
You need to design a Verilog module that performs unsigned division on two 32-bit numbers. The module should have an asynchronous clear signal (aclr) to reset the division operation, and a clock signal (clock) to synchronize the process. The module should take two 32-bit inputs: the divisor (denom) and the dividend (numer). It should produce two 32-bit outputs: the quotient (quotient) and the remainder (remain). The module should be optimized for speed with an 11-stage pipeline and should ensure that the remainder is always positive. This Verilog module, named `div\_unsigned`, has the interface designed as follows:

| Signal Name | Direction | Width | Description                                        |

| ----------- | --------- | ----- | -------------------------------------------------- |

| `aclr`      | Input     | 1     | Asynchronous clear signal to reset the module      |

| `clock`     | Input     | 1     | Clock signal to synchronize the division operation |

| `denom`     | Input     | 32    | 32-bit unsigned divisor input                      |

| `numer`     | Input     | 32    | 32-bit unsigned dividend input                     |

| `quotient`  | Output    | 32    | 32-bit unsigned quotient output                    |

| `remain`    | Output    | 32    | 32-bit unsigned remainder output, always positive  |
\end{tcolorbox}

\section{Prompts}
\label{sec:prompts}

Below shows the prompt for generating instructions given the GitHub codes by \DSIII.

\begin{tcolorbox}[width=1.0\linewidth, halign=left, colframe=black, colback=white, boxsep=0.01mm, arc=1.5mm, left=2mm, right=2mm, boxrule=1pt, fonttitle=\footnotesize, fontupper=\fontsize{7.4pt}{7.4pt}\selectfont, title={Prompt for Generating Questions Given Codes}]
\vspace{3pt}

\#\#\# demonstration1

Please write a description about the code snippet and a problem such that the given code snippet is exactly the answer to this problem.

[The Code Snippet]

\textasciigrave\textasciigrave\textasciigrave verilog
\begin{lstlisting}[language=verilog]
module top_module(
        input ring,
        input vibrate_mode,
        output ringer,
        output motor
);
        assign ringer = ring & ~vibrate_mode;
        assign motor = ring & vibrate_mode;
endmodule
\end{lstlisting}
\textasciigrave\textasciigrave\textasciigrave

[Description]

This Verilog module is a top-level module that controls the ringer and motor of a device. It has four ports: two inputs (ring and vibrate\_mode) and two outputs (ringer and motor).   The ring input is a signal that is used to indicate when the device should be ringing or vibrating. The vibrate\_mode input is a signal that indicates whether the device should be ringing or vibrating.   The ringer output is a signal that is used to control the ringer of the device. It is set to 1 when the device should be ringing and 0 when it should not be ringing. The motor output is a signal that is used to control the motor of the device. It is set to 1 when the device should be vibrating and 0 when it should not be vibrating.   The module uses two assign statements to control the ringer and motor outputs. The first assign statement sets the ringer output to the logical AND of the ring and the logical NOT of the vibrate\_mode inputs. This means that the ringer output will be set to 1 when the ring input is 1 and the vibrate\_mode input is 0, and it will be set to 0 otherwise.   The second assign statement sets the motor output to the logical AND of the ring and the vibrate\_mode inputs. This means that the motor output will be set to 1 when both the ring input and the vibrate\_mode input are 1, and it will be set to 0 otherwise.   This module is used to control the ringer and motor of a device based on the ring and vibrate\_mode inputs. When the ring input is 1 and the vibrate\_mode input is 0, the ringer output will be set to 1 and the motor output will be set to 0. When the ring input is 1 and the vibrate\_mode input is 1, the ringer output will be set to 0 and the motor output will be set to 1.  

[Problem]

Suppose you are designing a circuit to control a cellphone's ringer and vibration motor. Whenever the phone needs to ring from an incoming call (input ring), your circuit must either turn on the ringer (output ringer = 1) or the motor (output motor = 1), but not both. If the phone is in vibrate mode (input vibrate\_mode = 1), turn on the motor. Otherwise, turn on the ringer. This Verilog module, named `top\_module`, has the interface designed as follows:

| Signal Name  | Direction | Width | Description                                                             |

|--------------|-----------|-------|-------------------------------------------------------------------------|

| `ring`       | Input     | 1     | Indicates when the device should be ringing or vibrating                |

| `vibrate\_mode` | Input   | 1     | Selects vibrate mode; when high, the device vibrates instead of ringing  |

| `ringer`     | Output    | 1     | Controls the device's ringer; high to ring, low otherwise               |

| `motor`      | Output    | 1     | Controls the device's motor; high to vibrate, low otherwise             |

~\\

\#\#\# demonstration2

Please write a description about the code snippet and a problem such that the given code snippet is exactly the answer to this problem.

[The Code Snippet]

\textasciigrave\textasciigrave\textasciigrave verilog
\begin{lstlisting}[language=verilog]
module top_module(
        input clk,
        input resetn,        
        input [1:0] byteena, 
        input [15:0] d,      
        output reg [15:0] q);
    always @(posedge clk) begin
            if (!resetn)
                    q <= 0;
            else begin
                    if (byteena[0])
                            q[7:0] <= d[7:0];  
                    if (byteena[1])
                            q[15:8] <= d[15:8];
            end
    end
endmodule
\end{lstlisting}
\textasciigrave\textasciigrave\textasciigrave

[Description]

This Verilog module is a top-level module which is used to write data to a 16-bit register. It has five inputs and one output. The inputs are a clock signal (clk), a reset signal (resetn), a 2-bit byte enable signal (byteena), and a 16-bit data signal (d). The output is a 16-bit register (q).  The module is triggered by a rising edge of the clock signal. When the reset signal is low, the register is reset to 0. Otherwise, the data signal is written to the register based on the byte enable signal. If the first bit of the byte enable signal is high, the lower 8 bits of the data signal are written to the lower 8 bits of the register. If the second bit of the byte enable signal is high, the upper 8 bits of the data signal are written to the upper 8 bits of the register.

[Problem]
Create 16 D flip-flops. It's sometimes useful to only modify parts of a group of flip-flops. The byte-enable inputs control whether each byte of the 16 registers should be written to on that cycle. byteena[1] controls the upper byte d[15:8], while byteena[0] controls the lower byte d[7:0]. resetn is a synchronous, active-low reset. All DFFs should be triggered by the positive edge of clk. This Verilog module, named `top\_module`, has the interface designed as follows:

| Signal Name | Direction | Width | Description                        |

|-------------|-----------|-------|------------------------------------|

| `clk`         | Input     | 1     | Clock signal                       |

| `resetn`      | Input     | 1     | Active low reset signal            |

| `byteena`     | Input     | 2     | Byte enable signal                  |

| `d`           | Input     | 16    | Data input signal                   |

| `q`           | Output    | 16    | 16-bit register output              |
\end{tcolorbox}

\begin{tcolorbox}[width=1.0\linewidth, halign=left, colframe=black, colback=white, boxsep=0.01mm, arc=1.5mm, left=2mm, right=2mm, boxrule=1pt, fonttitle=\footnotesize, fontupper=\fontsize{7.5pt}{7.5pt}\selectfont]
\vspace{3pt}

(continued)

\#\#\# demonstration3

Please write a description about the code snippet and a problem such that the given code snippet is exactly the answer to this problem.

[The Code Snippet]

\textasciigrave\textasciigrave\textasciigrave verilog
\begin{lstlisting}[language=verilog]
module top_module(
        input clk,
        input reset,
        output reg [3:0] q);

        always @(posedge clk)
                if (reset || q == 10)
                        q <= 1;
                else
                        q <= q+1;

endmodule
\end{lstlisting}
\textasciigrave\textasciigrave\textasciigrave

[Description]

This top Verilog module is a simple counter that increments its output q by one every clock cycle. It has 3 inputs, a clock (clk), a reset signal (reset), and an output register (q). The output register is a 4-bit register, meaning it can store values from 0 to 15.   The module is triggered on the rising edge of the clock signal. When the reset signal is active, the output register is set to 1. If the reset signal is not active, the output register is incremented by one. When the output register reaches 10 (1010 in binary), it is reset to 1. This process is repeated every clock cycle.

[Problem]

Make a decade counter that counts 1 through 10, inclusive. The reset input is active high synchronous, and should reset the counter to 1. This Verilog module, named `top\_module`, has the interface designed as follows:

| Signal Name | Direction | Width | Description                                                   |

|-------------|-----------|-------|---------------------------------------------------------------|

| `clk`       | Input     | 1     | Clock signal that triggers the counter on its rising edge    |

| `reset`     | Input     | 1     | Active-high synchronous reset signal to initialize the counter|

| `q`         | Output    | 4     | 4-bit register output representing the current count (1-10)   |

~\\

\#\#\# demonstration4
Please write a description about the code snippet and a problem such that the given code snippet is exactly the answer to this problem.

[The Code Snippet]

\textasciigrave\textasciigrave\textasciigrave verilog
\begin{lstlisting}[language=verilog]
module top_module(
        input clk,
        input reset,
        output reg [4:0] q);

        logic [4:0] q_next;
        always @(q) begin
                q_next = q[4:1];
                q_next[4] = q[0];
                q_next[2] ^= q[0];
        end

        always @(posedge clk) begin
                if (reset)
                        q <= 5'h1;
                else
                        q <= q_next;
        end
endmodule
\end{lstlisting}
\textasciigrave\textasciigrave\textasciigrave

[Description]

The top module has 3 inputs and 1 output, where the inputs are clk, reset, and output is q.
The module has 2 always blocks to define the state transition of q and the logic description.
The state transitions are defined in the first always block, which is triggered when q changes.
In the first always block, q\_next is assigned with q[4:1], which is the value of q except the LSB bit.
Then, q\_next[4] is assigned with q[0], which is the LSB bit.
Lastly, q\_next[2] is xored with q[0].
The second always block is triggered at positive edge of clk.
If reset is active, q is assigned with 5'h1, which is the reset value.
If reset is inactive, q is assigned with q\_next, which is the state transition.
The port connections of instantiated modules are shown above.
The module takes clk, reset, and q as input.
q is a 5 bit output, which is assigned with 5'h1 at reset and q\_next at positive edge of clk.

[Problem]

A linear feedback shift register is a shift register usually with a few XOR gates to produce the next state of the shift register. A Galois LFSR is one particular arrangement where bit positions with a "tap" are XORed with the output bit to produce its next value, while bit positions without a tap shift. If the taps positions are carefully chosen, the LFSR can be made to be "maximum-length". A maximum-length LFSR of n bits cycles through 2**n-1 states before repeating (the all-zero state is never reached). Build a 5-bit maximal-length Galois LFSR with taps at bit positions 5 and 3. The active-high synchronous reset should reset the LFSR output to 1. This Verilog module, named `top\_module`, has the interface designed as follows:

| Signal Name | Direction | Width | Description                                                      |

|-------------|-----------|-------|------------------------------------------------------------------|

| `clk`       | Input     | 1     | Clock signal that triggers state transitions on rising edges.   |

| `reset`     | Input     | 1     | Active-high synchronous reset signal to initialize the LFSR.      |

| `q`         | Output    | 5     | Current state of the LFSR, representing a 5-bit value (`00001` to `11111`). |

\end{tcolorbox}

\begin{tcolorbox}[width=1.0\linewidth, halign=left, colframe=black, colback=white, boxsep=0.01mm, arc=1.5mm, left=2mm, right=2mm, boxrule=1pt, fonttitle=\footnotesize, fontupper=\fontsize{7.5pt}{7.5pt}\selectfont]
\vspace{3pt}

(continued)

\#\#\# demonstration5

Please write a description about the code snippet and a problem such that the given code snippet is exactly the answer to this problem.

[The Code Snippet]

\textasciigrave\textasciigrave\textasciigrave verilog
\begin{lstlisting}[language=verilog]
module top_module (
        input [99:0] in,
        output [98:0] out_both,
        output [99:1] out_any,
        output [99:0] out_different
);

        assign out_both = in & in[99:1];
        assign out_any = in | in[99:1];
        assign out_different = in^{in[0], in[99:1]};
endmodule
\end{lstlisting}
\textasciigrave\textasciigrave\textasciigrave

[Description]

This Verilog module is used to compare two input signals and generate three output signals. The first input signal is a 100-bit wide vector, and the second input signal is the same vector shifted by one bit. The module has three output signals, out\_both, out\_any, and out\_different.   The out\_both signal is generated by performing a bit-wise AND operation between the two input signals. This will result in a 99-bit wide vector, where each bit is 1 only if both the corresponding bits of the two input signals are 1.  The out\_any signal is generated by performing a bit-wise OR operation between the two input signals. This will result in a 100-bit wide vector, where each bit is 1 if either of the corresponding bits of the two input signals is 1.  The out\_different signal is generated by performing a bit-wise XOR operation between the two input signals. This will result in a 100-bit wide vector, where each bit is 1 only if the corresponding bits of the two input signals are different. The first bit of the out\_different signal is generated by performing a bit-wise XOR operation between the first bit of the first input signal and the last bit of the second input signal.

[Problem]

You are given a 100-bit input vector in[99:0]. We want to know some relationships between each bit and its neighbour: 

// (1) out\_both: Each bit of this output vector should indicate whether both the corresponding input bit and its neighbour to the left are '1'. For example, out\_both[98] should indicate if in[98] and in[99] are both 1. Since in[99] has no neighbour to the left, the answer is obvious so we don't need to know out\_both[99]. 

// (2) out\_any: Each bit of this output vector should indicate whether any of the corresponding input bit and its neighbour to the right are '1'. For example, out\_any[2] should indicate if either in[2] or in[1] are 1. Since in[0] has no neighbour to the right, the answer is obvious so we don't need to know out\_any[0].   

// (3) out\_different: Each bit of this output vector should indicate whether the corresponding input bit is different from its neighbour to the left. For example, out\_different[98] should indicate if in[98] is different from in[99]. For this part, treat the vector as wrapping around, so in[99]'s neighbour to the left is in[0].

This Verilog module, named `top\_module`, has the interface designed as follows:

| Signal Name     | Direction | Width | Description                                                                                      |

|-----------------|-----------|-------|--------------------------------------------------------------------------------------------------|

| `in`            | Input     | 100   | 100-bit input vector for analyzing bit relationships                                             |

| `out\_both`      | Output    | 99    | Each bit indicates if both the corresponding input bit and its left neighbor are '1'             |

| `out\_any`       | Output    | 99    | Each bit indicates if either the corresponding input bit or its right neighbor is '1'            |

| `out\_different` | Output    | 100   | Each bit indicates if the corresponding input bit is different from its left neighbor, circularly |

~\\

\#\#\# Instruction

Please write a description about the code snippet and a problem such that the given code snippet is exactly the answer to this problem.

[The Code Snippet]

\textasciigrave\textasciigrave\textasciigrave verilog

\textcolor{red}{\{The Given Code Snippet\}}

\textasciigrave\textasciigrave\textasciigrave

\#\#\# Response

\end{tcolorbox}

Our prompt begins by presenting five distinct demonstrations. Each demonstration first provides a description of a code snippet, followed by the generation of a corresponding problem. We then prompt the model (\DSIII) to generate a problem similarly based on the given code snippet colored in red. This process mirrors the multi-level summarization mechanism in CodeV~\cite{codev}.

We also show the system prompt we use during training (both SFT and RL) and testing (on benchmarks) as below.

\begin{tcolorbox}[width=1.0\linewidth, halign=left, colframe=black, colback=white, boxsep=0.01mm, arc=1.5mm, left=2mm, right=2mm, boxrule=1pt, fonttitle=\footnotesize, 
title={System Prompt for Training and Testing}]
\vspace{3pt}

You are a helpful assistant. The assistant first thinks about the reasoning process in the mind and then provides the user with the answer. The reasoning process and answer are enclosed within <think> </think> and<answer> </answer> tags, respectively, i.e., <think> reasoning process here </think><answer> answer here </answer>.  Now the user asks you to write verilog code. After thinking, when you finally reach a conclusion, enclose the final verilog code in \textasciigrave\textasciigrave\textasciigrave verilog \textasciigrave\textasciigrave\textasciigrave within <answer> </answer> tags. i.e., 
<answer>
\textasciigrave\textasciigrave\textasciigrave verilog\textbackslash n
module top\_module(in, out, ...); ... 
\textasciigrave\textasciigrave\textasciigrave
</answer>.

\end{tcolorbox}








\section{Broader Impacts}
\label{sec:braoder_impacts}

Through distillation from \RR and reinforcement learning, \modelFinal even outperforms \RR-671B on RTLLM v1.1 and RTLLM v2, while outperforming previous Verilog-domain state-of-the-art models (typically 7~15B) by 12$\sim$21 \% on RTLLM v1.1 and v2.
Through these results, our work demonstrates the promising potential of reinforcement learning for improving circuit design.

However, analogous to other code generation models, \modelFinal may produce code that misaligns with user intentions or even be misused for unintended purposes. As comprehensively analyzed in broader impact studies~\cite{chen2021evaluating}, such risks include but are not limited to: 

\begin{enumerate}
    \item Functional misalignment: Generated code might superficially satisfy requirements but fail to execute as intended, particularly in safety-critical circuit designs.
    \item Security vulnerabilities: The model could inadvertently generate insecure code (e.g., flawed logic or backdoors), which poses risks in hardware deployment.
    \item Misuse in malicious contexts: Lower barriers to code generation may facilitate the creation of obfuscated or harmful designs, especially as model capabilities scale.
\end{enumerate}

Given the potentially severe consequences of such issues in hardware systems, we strongly recommend that users:
\begin{enumerate}
    \item Conduct rigorous functional verification and security audits for all generated code.
    \item Implement access controls and usage monitoring to mitigate abuse risks.
    \item Adopt a principle of "human-in-the-loop" oversight, particularly for high-stakes applications.
\end{enumerate}

\section{Limitations and Future Work}
\label{sec:limitation_and_future_work}

This work has several limitations, and we primarily discuss two key aspects that also define our future direction: (1) The automated testbench generation framework can only improve the semantic consistency between code and NL in the probabilistic sense. The synthetic dataset generated by our method both for SFT and RL may still contain a small amount of low-quality data, which could potentially impact the model's performance. (2) Collecting data with reasoning processes for SFT requires a general reasoning model (e.g., \RR), which inherently depends on the teacher model’s reasoning capabilities. This dependency poses greater challenges in specialized domains where the teacher model’s performance is suboptimal, as its limitations in such contexts may directly impact the quality of the collected data. Besides, this process might be financially costly.

Additionally, from an application perspective, it is promising to focus on exploring the potential of reasoning LLMs to tackle more complex hardware development tasks beyond RTL code generation in the future, such as PPA performance optimization and analog circuit synthesis.

\end{document}